\newtheorem{theorem}{Theorem}
\newtheorem{lemma}{Lemma}
\title{Unsupervised Federated Domain Adaptation for Segmentation of MRI Images}
\author{\name Navapat Nananukul \email nananuku@usc.edu \\
      \addr Department of Computer Science\\
      University of Southern California, Los Angeles, 90089, CA, USA
      \AND
      \name Hamid Soltanian-Zadeh \email hsoltan1@hfhs.org \\
      \addr Medical Image Analysis Laboratory, Departments of Radiology and Research Administration,\\
      Henry Ford Health System, Detroit, MI, 48202, USA
      \AND
      \name Mohammad Rostami \email rostamim@usc.edu \\
      \addr Department of Computer Science\\
      University of Southern California, Los Angeles, 90089, CA, USA}
\begin{document}

\maketitle

\begin{abstract}
Automatic semantic segmentation of magnetic resonance imaging (MRI) images  using deep neural networks greatly assists in evaluating and planning treatments for various clinical applications. However, training these models is conditioned on the availability of abundant annotated data to implement the end-to-end supervised learning procedure. Even if we annotate enough data, MRI images display considerable variability due to factors such as differences in patients, MRI scanners, and imaging protocols. This variability necessitates retraining neural networks for each specific application domain, which, in turn, requires manual annotation by expert radiologists for all new domains.   To relax the need for persistent data annotation, we   develop a method for unsupervised federated domain adaptation using multiple annotated source domains. Our approach enables the transfer of knowledge from several annotated source domains to adapt a model for effective use in an unannotated target domain. Initially, we ensure that the target domain data shares similar representations with each source domain in a latent embedding space, modeled as the output of a deep encoder, by minimizing the pair-wise distances of the distributions for the target domain and the source domains. We then employ an ensemble approach to leverage the knowledge obtained from all domains. We provide theoretical analysis  and perform experiments on the MICCAI 2016 multi-site dataset to demonstrate our method is effective.      
\end{abstract}

\section{Introduction}
\label{sec:introduction}

Semantic   segmentation of MRI images can help to   detect anatomical structures or regions of interest in these images to simplify the interpretation of these images. High-quality segmented images are extremely useful in  applications such as disease detection and monitoring~\cite{hatamizadeh2021swin,karayegen2021brain}, surgical Guidance~\cite{jolesz2001integration,wei2022semantic}, treatment response assessment~\cite{kickingereder2019automated}, and AI-aided diagnosis~\cite{arsalan2019aiding}.
UNet-based convolutional neural networks (CNNs) have shown to be  effective for automatic semantic segmentation of MRI images 
 \cite{pravitasari2020unet,maji2022attention}, but their adoption in clinical settings has been quite limited. A major reason for this limitation is that training deep neural networks  requires large annotated datasets. Annotating MRI data reliably requires the expertise of trained radiologists and physicians which makes it a  challenging process. Moreover, using   crowdsourcing annotation platforms would be inapplicable because medical data is normally distributed in different institutions and due to the lack of specialized knowledge   by an average person~\cite{rostami2018crowdsourcing}.
 
Even if we prepare a suitable annotated datasets and successfully train a segmentation model, it may not generalize well  in practice. The reason is that MRI images are known to be significantly variable  due to differences in patients, MRI scanners, and imaging protocols~\cite{kruggel2010impact,ackaouy2020unsupervised}. These variances introduce domain shift during testing ~\cite{sankaranarayanan2018learning} which  leads to model performance degradation~\cite{xu2019self}. Annotating data persistently  and then retraining the model from scratch may address this challenge   but is an inefficient solution.
Unsupervised Domain Adaptation (UDA) is a framework that has been developed to tackle the issue of domain shift without requiring data annotation. The goal in UDA is to enable the generalization of a model which is trained on a source domain with annotated data  to a   target domain with only unannotated data  \cite{biasetton2019unsupervised,zou2018unsupervised}.

A major approach to address UDA is to map data points from a source and a target domain into a shared latent embedding space at which the two distributions are aligned~\cite{rostami2021transfer}. 
Since domain-shift would not exist in such a latent feature space, a segmentation model which is trained solely using the source domain data and receives latent features at its input, would generalize on the target domain data.
The major approach to implement this idea is to model the data mapping function using a deep neural encoder network, where its output-space models the shared latent space. The encoder is trained such that it aligns the source and the target distributions at its output.
This process can be achieved using adversarial learning (\cite{javanmardi2018domain,cui2021bidirectional,sun2022rethinking,jian2023unsupervised,hung2018adversarial}) or direct probability matching (\cite{damodaran2018deepjdot,ackaouy2020unsupervised,al2021olva,rostami2019deep,xu2020reliable}).
In the former approach, the distributions are matched indirectly through competing generator and discriminator networks to learn a domain-agnostic embedding at the output of the generator.  In the latter approach, a probability metric is selected and minimized to align the distributions directly in the latent embedding embedding space.

Most UDA methods utilize a single source domain for knowledge transfer. However, we may have access to several source domains. Specifically, medical data is usually distributed in different institutions and often we can find several source domains. For this reason, classic UDA has been extended to multi-source UDA (MSUDA), where the goal is to benefit from multiple distinct sources of knowledge~\cite{zhao2019multi,tasar2020standardgan,gong2021mdalu,he2021multi,stan2022secure}. The possibility of leveraging collective information from multiple annotated source domains can enhance model generalization compared to single-source UDA.
Unlike single-source UDA, MSUDA algorithms need to consider   the differences in data distribution between pairs of source domains in addition to the disparities between a single source domain and the target domain.

A naive approach to address MSUDA is to assume that the annotated source datasets can be transferred to a central server and then processed similar to single-source UDA. However, this assumption overlooks potential common constraints in medical domain problems such as privacy and security regulations. These regulations often   prevent sharing data across the source domains.
To overcome these challenges, we propose an alternative two-step MSUDA algorithm. In the first step, we train a model between each source and the target domain by solving a single-source UDA problem. We rely on direct probability metric minimization for this purpose. During the testing time on the target domain, we  use these models individually to segment an image and then aggregate the resulting segmented images according to the confidence we have in each model in a pixel-wise manner. As a result, we maintain the privacy constraints between the source domains and improve upon single-source UDA algorithms.  We offer a theoretical justification for our method by demonstrating that our algorithm minimizes an upper-bound of the target error. In addition, we provide   experimental results on the MICCAI 2016 multi-site dataset to showcase the effectiveness of our approach.

\section{Related work}
\label{sec:relatedwork}


\textbf{Semantic Segmentation of MRI Data}  
 Semantic segmentation of MRI images helps to increase the clarity and interpretability of these image \cite{icsin2016review}. While this task is often performed manually by radiologists in clinical settings, manual annotations is prone to inter-reader variations, expensive, and time-consuming. To address these limitations,   classical machine learning algorithms have been used to automate segmenting   MRI scans \cite{levinski2009interactive,liu2015new,carreira2012semantic,sourin2010segmentation}. However, these algorithms rely on hand-crafted features which require expertise in engineering and medicine, and careful creation of imaging features given a specific problem of interest. Additionally, anatomical variations, variations in MRI acquisition settings and scanners, imperfections in image acquisition, and variations in pathology appearance serve as obstacles for their generalization in clinical settings.

 Deep learning models have the capacity to relax the need for   feature engineering. Specifically, architectures based on  convolutional neural networks (CNNs) have been found quite effective in medical semantic segmentation~\cite{long2015fully,ronneberger2015u,du2020medical}. Fully Convolutional Networks (FCNs) \cite{du2020medical} extend the vanilla CNN architecture to an end-to-end model for pixel-wise prediction which is more suitable for semantic segmentation.  FCNs have an encoder-decoder structure, where the core idea is to replace fully connected layers of a CNN with up-sampling layers that map back the  features that are extracted by the convolutional layers to the original input space dimension. This way, the model can be trained to predict the semantic masks directly at its output. As an extension to FCNs, U-Nets~\cite{ronneberger2015u} are the dominant architecture for medical semantic segmentation tasks. U-Nets are similar to FCNs, but  skip connections between the encoder and decoder layers are used to preserve spatial information at all abstraction levels. For this reason, the  number of down-sampling and up-sampling layers are equal in a U-Net to make adding skip connections between   pairs of layers that have the same hierarchy possible. Similar to CNNs, skip connections  help  propagating the  spatial information is in deeper layers of U-Nets which helps to have  accurate   segmentation results through using features with different abstraction levels. 
The downside of U-Nets is the necessity of having large annotated datasets to train them.
 
\textbf{Single-Source UDA} are developed to relax the need for persistent data annotation and improve model generalization using solely unannotated data. These methods utilize only one source domain with annotated data to adapt a model to generalize on the unannotated target domain. The notion of domain is subjective and can be even defined for the same problem if a condition is changed during the model testing phase~\cite{rostami2021cognitively}. UDA methods have been used extensively on the two areas of image classification \cite{goodfellow2014generative,hoffman2018cycada,dhouib2020margin,luc2016semantic,tzeng2017adversarial,sankaranarayanan2017generate,long2015learning,pmlr-v70-long17a,morerio2017minimal,rostami2022increasing,rostami2019sar} and image segmentation
\cite{javanmardi2018domain,cui2021bidirectional,sun2022rethinking,damodaran2018deepjdot,ackaouy2020unsupervised,al2021olva,stan2021unsupervised,stan2021privacy}. The classic workflow in UDA is to train a deep neural network on both the annotated source domain and the unannotated target domain such that the end-to-end learning is supervised by the source domain data and domain alignment is realized in a network hidden layer as a latent embedding space using data from both domains. As a result, the network would generalize on the target domain. 

The alignment of the distributions for UDA is often achieved by utilizing generative adversarial networks \cite{goodfellow2014generative,hoffman2018cycada,dhouib2020margin,javanmardi2018domain,cui2021bidirectional,sun2022rethinking} or probability metric minimization \cite{long2015learning,pmlr-v70-long17a,morerio2017minimal,damodaran2018deepjdot,ackaouy2020unsupervised,al2021olva}.  Adversarial learning aligns two distributions indirectly at the output of the generative subnetwork. For metric minimization, we minimize a suitable probability metric between the embeddings of the source and target domains \cite{long2015learning,pmlr-v70-long17a,morerio2017minimal,damodaran2018deepjdot,ackaouy2020unsupervised,al2021olva,rostami2020generative,rostami2021lifelong} and minimize it at the output of a shared encoder for direct distribution alignment. The upside of this approach is that it requires less hyperparameter tuning.
However, single-Source UDA algorithms do not leverage inter-domain statistics when multiple source domains are present. Therefore, extending single-source UDA algorithms to a multi-source federated setting is a non-trivial task that requires careful consideration to mitigate the negative effect of distribution mismatches between several source domains.

\textbf{Multi-Source UDA} is an extension to single-source UDA  to benefit from multiple source domains to enhance the model generalization on a single target domain \cite{xu2018deep,zhao2019multi,tasar2020standardgan,gong2021mdalu}. MSUDA is a more challenging problem due to variances across the source domains.
  Xu et al.~\citeyear{xu2018deep} extended adversarial learning to MSUDA by first reducing the difference between source and target domains using multi-way adversarial
learning and then integrating the 
corresponding category classifiers. 
Zhao et al.~\cite{zhao2019multi} extend this idea by introducing dynamic semantic
consistency in addition to using the pixel-level cycle-consistently towards the target domain.
StandardGAN~\cite{tasar2020standardgan}
relies on adversarial learning but it standardizes data for each source and target domains so that all domains share similar distributions to reduce the adverse effect of variances. 
  Peng et al.~\citeyear{peng2019moment} align inter-domain statistics of source domains in an embedding space to mitigate the effect of domain shift between the source domains. Guo et al.~\citeyear{guo2018multi} adopt a meta-learning approach to combine domain-specific predictions, while Venkat et al.~\cite{venkat2021your} use pseudo-labels to improve domain alignment. Note that having more source data in the MUDA setting does not necessarily lead to improved performance compared to single-source UDA  because negative transfer, where adaptation from one domain hinders performance in another, can degrade the performance compared to using single-source UDA. Li et al.~\citeyear{NEURIPS2018_2fd0fd3e} leverage domain similarity to avoid negative transfer by utilizing model statistics in a shared embedding space. Zhu et al.~ \citeyear{zhu2019aligning} align deep networks at different abstraction levels to achieve domain alignment. Wen et al. \citeyear{pmlr-v119-wen20b} introduce a discriminator to exclude data samples with a negative impact on generalization. Zhao et al.~\citeyear{zhao2020multi} align target features with source-trained features using optimal transport and combine source domains proportionally based on the optimal transport distance.   mDALUA~\cite{gong2021mdalu} address the effect of negative transfer using domain attention, uncertainty
maximization, and attention-guided adversarial alignment.

\section{Problem Formulation}
\label{sec:problemformulation}

Our focus in this work is to train a segmentation model for a target domain with the data distribution
$\mathcal{T}$, where only unannotated images are accessible, i.e., we observe unannotated  samples $\mathcal{D}^T=\{ \bm{x}^t_1, \ldots, \bm{x}^t_{n^t}\}$ from the target domain distribution $\mathcal{T}$. Each data point $\bm{x}\in \mathbb{R}^{W\times H \times C}$ in the input space  is an ${W\times H \times C}$ MRI image, where $W,H,$ and $C$ denote the width, height, and the number of channels for the image. The goal is to segment an input image into  semantic classes which are clinically meaningful, e.g., different organs in a frame. Since training a segmentation model with unannotated images is an ill-posed problem, we
consider that we also have access to $N$ distinct domains with the data distributions $\mathcal{S}_1, \mathcal{S}_2 \dots \mathcal{S}_N$, where annotated segmented images are accessible in each domain, i.e.,   we have access to the annotated samples $\mathcal{D}^S_k=\{(\bm{x}^s_{k,1}, \bm{y}_{k,1}^s), \ldots, (\bm{x}^s_{k, n^s_k}, \bm{y}^s_{k, n^s_k})\}$, where $\bm{x}^s_{k}\sim \mathcal{S}_k$ and $\forall i,j: \mathcal{S}_i\neq \mathcal{S}_j$, $\mathcal{S}_i\neq \mathcal{T}$. Each point $\bm{y}$ in the output space is a semantic mask with the same size of the input MRI images, prepared by a medical professional.  We consider a segmentation  model  $f_\theta(\cdot) : \mathbb{R}^{W\times H \times C} \rightarrow \mathbb{R}^{|\mathcal{Y}|}$ with learnable parameters $\theta$, e.g., 3D U-Net~\cite{ahmad2021context}, that should be trained to map the input image into a semantic mask, where $|\mathcal{Y}|$ is the number of shared semantic classes across the domains, determined by clinicians according to a specific problem.  It is crucial to note that the  semantic classes are the same classes across all the domains.

To train a generalizable segmentation model with a single source domain, we can rely on the common approach of UDA, where we adapt a  source-trained model  to generalize better on the target domain. To this end, we can first train the segmentation model for the single source domain. This is a straightforward task which can be performed using   empirical risk minimization (ERM) on the corresponding annotated dataset: 
\begin{equation}
\label{ERM}
   \theta_k =  \arg\min_\theta \mathcal{L}_{SL}(f_\theta, \mathcal{D}^S_k) = \arg\min_\theta \frac{1}{n^s_k} \sum_{i=1}^{n^s_k} \mathcal{L}_{ce} (f_\theta(\bm{x}^s_{k, i}), \bm{y}_{k, i}), 
\end{equation}
where $\mathcal{L}_{ce}$ denotes the cross-entropy loss.
Because the target and source domains share the same semantic classes, the source-trained model can be directly used on the target. However, its performance will degrade on the target domain because of the  distributional differences between the source domains and the target domain, i.e.,  because $\mathcal{S}_k \neq \mathcal{T}$. The goal in single-source UDA  is to leverage the target domain unannotated  dataset and the source-trained model  and adapt the model to have an enhanced  generalization on the target domain.
The common   strategy for this purpose is to map the data points from the source and the target domains  into a shared embedding space in which distributional differences are minimized.
To model this process, we consider that the base model $f_\theta$ can be decomposed into an encoder subnetwork $g_{\bm{u}}(\cdot):\mathbb{R}^{W \times H \times C} \rightarrow \mathbb{R}^{d_Z}$ and a classifier subnetwork $h_{\bm{v}}(\cdot):\mathbb{R}^{d_Z} \rightarrow \mathbb{R}^{|\mathcal{Y}|}$ with learnable parameters $\bm{u}$ and $\bm{v}$, where $f(\cdot) = (h \circ g)(\cdot)$ and $\theta = (\bm{u},\bm{v})$. In this formulation, the output-space of the encoder subnetwork models a latent embedding space with dimension $d_Z$. In a single-source UDA setting, we select a distributional discrepancy metric $D(\cdot,\cdot)$ to define a cross-domain loss function and train the encoder by minimizing the selected metric.
As a result, the distributions of both domains become similar in the latent space and hence  the source-trained classifier subnetwork $h_k(\cdot)$ will generalize on the target domain $\mathcal{T}$. 
Many UDA methods have been developed using this approach and we   base our method for multi-source UDA  on this solution for each of the source domains.

To address a multi-source UDA setting, a  naive solution is to gather data from all source domains centrally  and create a single global source dataset, and then use single-source UDA. However, this approach is not practical in medical domains due to strict regulations and concerns about data privacy and security which prevent sharing data across different   source domains.
Even if data sharing were permitted, another major challenge arises from the significant differences in data distributions and characteristics among   source domains. These differences can lead to negative  knowledge transfer across the domains~\cite{wang2019characterizing}. Negative knowledge transfer can occur because information from some source domains might be irrelevant or even harmful to the performance on the target domain. Additionally, the data from different source domains could interfere with each other, further complicating the learning process.
These issues create a situation where finding a common representation that works effectively for all the source domains becomes challenging.
To address these challenges, our approach of choice is ensemble learning. Ensemble learning involves combining multiple models or learners to improve overall collective performance. In the context of multi-source UDA, the idea is to develop individual single-source UDA models using each source domain and then  leverage  the strengths of these models and combines their predictions to make a final prediction.

\section{Proposed Multi-Source UDA Algorithm}

\begin{figure}[ht]
    \centering
    \includegraphics[width=\textwidth]{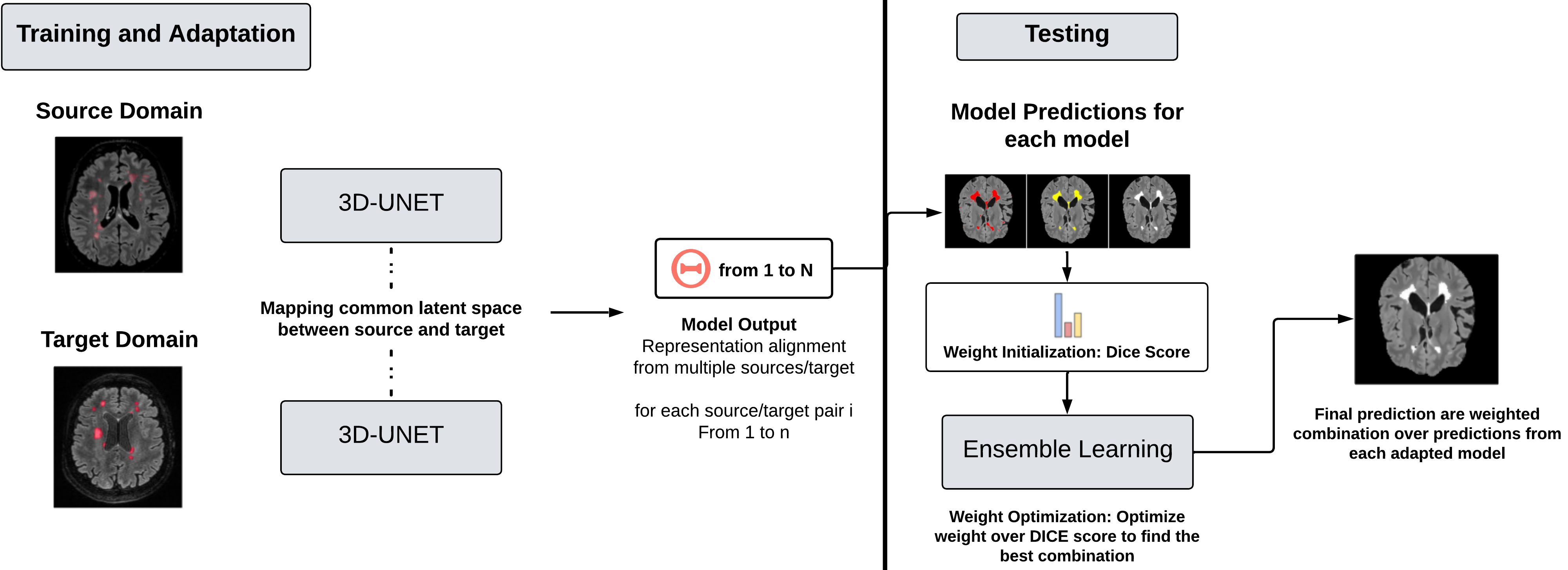}
    \caption{Block-diagram of the proposed multi-UDA approach: (a) we train   source-specific models   for each source domain based on ERM. (b)  we perform single-source UDA for adapting each source-trained model via  distributional alignment in the shared embedding space (c) we aggregate the individual source-trained model predictions to make the final prediction on the target domain predictions according to their reliability.}
    \label{fig:workflow_diagram}
\end{figure}

As illustrated in Figure \ref{fig:workflow_diagram}, we follow a two-stage procedure to address  multi-source UDA with MRI data. We first solve $N$ single-source UDA problems, each for one of the source domains. We then benefit from an ensemble of these distinct models. 
 To align the target distribution with a source domain distribution, we  use the Sliced Wasserstein Distance  (SWD ) because it a suitable metric  for deep learning optimization~\cite{rostami2019complementary,rostami2023domain}. Because SWD has the nice property of having non-vanishing gradients \cite{rabin2011wasserstein,rostami2023overcoming}.  Moreover, SWD can be computed using a closed-form solution from the empirical samples of two distributions:
\begin{equation}
\small
    \label{eq:w2_align}
    \begin{split}
        W_2(g(T), g(S_k)) = &\frac{1}{L} \sum_{l=1}^{L} |\langle g(x^t_{i_l}), \phi_l \rangle - \langle g(x^t_{k,i_l}  , \phi_l \rangle |^2
    \end{split}
\end{equation}
where $\phi_l$ denotes a 1D projection direction and $i_l,j_l$ denote indices that correspond to the sorted projections. 

We then solve the following optimization problem to adapt the model obtained from solving Eq. \eqref{ERM}:
\begin{equation}
\min_{\theta } \mathcal{L}_{SL}(f_\theta, \mathcal{D}^S_k)+ W_2(g_u(\mathcal{D}^S_k), u_u(\mathcal{D}^T))
\end{equation}
where $\gamma$ is a regularization parameter. The first term enforces the embedding space to remain discriminative and the second term aligns the two distribution in the embedding space.

After completing the adaptation process for each source domain, each model can generate a distinct mask on the target domain images. The key question in multi-source UDA is to obtain a solution that is better than these single-source UDA solutions.  We obtain the final model predictions for the target domain  by combining the probabilistic predictions from all $N$ adapted models. We combine the model predictions in a pixel-wise manner $\sum_{i=1}^n w_i f_{\theta_i}$ using mixing weights   $\bm{w} = (w_1, w_2, \dots, w_n)$, where   $0\ge w_i\ge 1$  and  $f_{\theta_i}$ represents the adapted model corresponding to the $i^{th}$ source domain. This aggregation process allows for benefiting from the source models without sharing data across the source domains. 
Choosing the appropriate weight values is the key remaining challenge. We need to assign the  weights such that the models that do not generalize well could not adversely impact the quality of the aggregated segmentation  mask. To address this concern, we employ the concept of \textit{prediction confidence} of the source model on the target domain as a proxy for the model generalization capability. To this end, we evaluate how confident the source model is when making predictions on the target domain and consider the measured confidence in the aggregation process. Intuitively, we reduce the contribution of less certain predictions. The rationale behind using prediction confidence as a basis for weight assignment is supported by empirical evidence, which we have presented in Section \ref{section:experiments}. 
We set a confidence threshold denoted as $\lambda$, tuned empirically, and compute the weight as follows:

\begin{equation}
\small
    \label{eq:choose_w}
    \tilde{w}_k \sim \sum_{i=1}^{n^t} \mathbb{1}(\max {\tilde{f}_{\theta_k}(\bm{x}^t_i}) > \lambda), \hspace{4mm} w_k = \tilde{w}_k/\sum\tilde{w}_k,
\end{equation}
where $\tilde{f}(\cdot)$ denotes the model output just prior to the final SoftMax layer. This output can be considered a probability distribution which measures certainty well. 
 If the prediction confidence of the $k^{th}$ model exceeds  $\lambda$, we assign $w_k$ to be a non-zero value to incorporate the predictions from that model into the final prediction process. However, if the prediction confidence falls below the threshold, we assign $w_k$ to be zero.

  \begin{wrapfigure}{R}{0.5\textwidth}
   \vspace{-7mm}
   \small
     \begin{minipage}{0.5\textwidth}
\begin{algorithm}[H]
\caption{Federated Multi-Source Unsupervised Domain Adaptation}
\begin{algorithmic}[1]
\Procedure{Train}{$S_i$, $T$}
    \State Train a 3D-UNet model
    \State Learn $f_{\theta_i}$ and $M_i$ by minimizing loss on $S_i$
    \State Tune $f_{\theta_i}$ on target domain $T$
    \State Initialize $w_i$ with DICE($M_i$)
   \State \Return $f_{\theta_i}$, $M_i$, $w_i$
\EndProcedure
\Procedure{Ensemble}{$x$, $T$, $f_{\theta_i}$, $w_i$}
    \State For target domain $T$, compute: $M(x) \gets \arg \max_i w_i p(M_i(x)|x, f_{\theta_i})$
    \State Optimize $w_i$ to maximize DICE($M$) on $T$
    \State $M \gets \frac{\sum_{i=1}^{N} w_i M_i}{\sum_{i=1}^{N} w_i}$
    \Return $M$
\EndProcedure
\Procedure{MSUDA}{$S$, $T$, $x$}
    \For{each source domain $S_i$ in $S$}
        \State $f_{\theta_i}$, $M_i$, $w_i$ $\gets$ \Call{Train}{$S_i$, $T$}
    \EndFor
    \State \Return \Call{Ensemble}{$x$, $T$, $f_{\theta_i}$, $w_i$}
\EndProcedure
    \end{algorithmic}
    \label{algorithm}
\end{algorithm}
 \end{minipage}
 \vspace{-5mm}
  \end{wrapfigure}
 Note that we maintain data privacy during the initial stages of pretraining and adaptation by ensuring that data samples are not shared between any two source domains. When we aggregate the predictions of the resulting models, we do not need the source data at all. As a result, our approach is applicable to medical domains when the source datasets are distributed across multiple entities.  Our approach also allows for benefiting from new source domains as new domains become available without requiring retraining the models from scratch. To this end, we only need to solve  new single-source UDA problems. We then update the normalized mixing weights using Equation~\ref{eq:choose_w} to benefit from the new domain to continually enhance the segmentation accuracy. The update process is   efficient and incurs negligible runtime compared to the actual model training. Hence, we offers a federated learning solution. 
Our proposed  approach is named ``Federated Multi-Source UDA'' (FMUDA), presented   in Algorithm~\ref{algorithm}.


\section{Theoretical analysis}
\label{sec:theoretical}
We present an analysis to demonstrate that our proposed algorithm effectively minimizes an upper bound for the error in the target domain. We adopt the framework developed by \cite{redko2017theoretical} for the \textit{single-source  UDA problem.}  In this analysis, we consider a hypothesis space denoted as $\mathcal{H}$, defined in the embedding space, which encompasses all classifier subnetworks. Each domain-specific model's learned representation is represented by $h_k(\cdot)$, where $k$ denotes the specific domain. Additionally, we let $e_\mathcal{D}(\cdot)$, where $\mathcal{D}\in\{\mathcal{S}_1, \mathcal{S}_2, \dots, \mathcal{S}_n, \mathcal{T}\}$, to represent the true expected error returned by a model $h(\cdot) \in \mathcal{H}$  on the domain $\mathcal{D}$. Additionally, let $\hat\mu_{\mathcal{S}k} = \frac{1}{n_k^s} \sum_{i=1}^{n_k^s} f(g(\bm{x}^s_{k,i}))$ and $\hat\mu_{\mathcal{T}} = \frac{1}{n^t} \sum_{i=1}^{n^t} f(g(\bm{x}^t_i))$ denote the empirical distributions constructed using the samples from the source domain and the target domain in the latent space, respectively.
Based on this framework, we establish the following theorem for our method.

In essence, our analysis in the latent space aims to provide a solid theoretical foundation for the effectiveness of our MUDA approach, demonstrating how it effectively minimizes the upper bound for the error in the target domain. By leveraging this theoretical insight, we can gain a deeper understanding of the algorithm's capabilities and make more informed decisions when applying it to real-world domain adaptation tasks.

\begin{theorem}{Consider Algorithm \ref{algorithm} for FMUDA under the explained conditions, then the following holds}
    \label{theorem:main}
    {\small
    \begin{equation}
        \label{eq:theorem}
        \begin{split}
            e_{\mathcal{T}}(h) \leq \sum_{k=1}^n w_k e_{\mathcal{S}_k}(h_k) + W_2(\hat{\mu}_{\mathcal{T}} ,\hat{\mu}_{\mathcal{S}_k}) + 
            \sqrt{\big(2\log(\frac{1}{\xi})/\zeta\big)}\big(\sqrt{\frac{1}{N_k}}+\sqrt{\frac{1}{M}}\big) + e_{\mathcal{C}_k}(h_k^*))
        \end{split}
    \end{equation}
    }
    
    where $\mathcal{C}_k$ is the combined error loss with respect to domain $k$, and $h^*_k$ is the optimal model with respect to this loss when a shared model is trained jointly on annotated datasets from all domains simultaneously. 
\end{theorem}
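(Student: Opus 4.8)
The plan is to reduce the multi-source bound to $N$ applications of the single-source Wasserstein generalization bound of \cite{redko2017theoretical}, exploiting the convex structure of the ensemble predictor to decouple the domains. The final hypothesis is the pixel-wise convex combination $h = \sum_{k=1}^n w_k h_k$ with $w_k \geq 0$ and $\sum_k w_k = 1$. Since the cross-entropy loss $\mathcal{L}_{ce}$ is convex in its prediction argument, Jensen's inequality gives, pointwise on any latent point $\bm{z}$ of the target domain, $\mathcal{L}_{ce}(\sum_k w_k h_k(\bm{z}), \bm{y}) \leq \sum_k w_k \mathcal{L}_{ce}(h_k(\bm{z}), \bm{y})$; taking expectation over $\mathcal{T}$ yields the first reduction
\begin{equation}
e_{\mathcal{T}}(h) \leq \sum_{k=1}^n w_k\, e_{\mathcal{T}}(h_k).
\end{equation}
It therefore suffices to bound each individual target error $e_{\mathcal{T}}(h_k)$ and recombine using the weights.

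For a fixed $k$, I would follow the Ben-David-style decomposition underlying the Redko framework. Introducing the ideal joint hypothesis $h_k^*$ that minimizes the combined source-target risk $\mathcal{C}_k$, the triangle inequality applied to the disagreement $e_{\mathcal{D}}(h_k,h_k^*)$ gives $e_{\mathcal{T}}(h_k) \leq e_{\mathcal{S}_k}(h_k) + \big| e_{\mathcal{T}}(h_k,h_k^*) - e_{\mathcal{S}_k}(h_k,h_k^*)\big| + e_{\mathcal{C}_k}(h_k^*)$, where the last term is precisely the combined-error quantity appearing in the statement. The cross-domain discrepancy is then controlled by the Wasserstein distance between the \emph{true} latent distributions: assuming the composition of the loss with the Lipschitz hypotheses in $\mathcal{H}$ is $1$-Lipschitz, Kantorovich--Rubinstein duality bounds $\big| e_{\mathcal{T}}(h_k,h_k^*) - e_{\mathcal{S}_k}(h_k,h_k^*)\big|$ by $W_1(\mu_{\mathcal{S}_k}, \mu_{\mathcal{T}})$, and since $W_1 \leq W_2$ (the $p$-Wasserstein distance is non-decreasing in $p$ by H\"older's inequality on the optimal coupling), this quantity is in turn controlled by the $W_2$ term of Eq.~\eqref{eq:theorem} once the empirical substitution below is made.

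The remaining and most delicate step, which I expect to be the main obstacle, is to replace the inaccessible true distributions $\mu_{\mathcal{T}}, \mu_{\mathcal{S}_k}$ by their empirical versions $\hat{\mu}_{\mathcal{T}}, \hat{\mu}_{\mathcal{S}_k}$. Here I would invoke the measure-concentration result for empirical transport used by \cite{redko2017theoretical}: if a distribution satisfies a $T_1(\zeta)$ transportation-cost inequality, then with probability at least $1-\xi$ its empirical Wasserstein distance deviates from the true one by at most $\sqrt{2\log(1/\xi)/\zeta}\,/\sqrt{N}$ in $N$ samples. Applying this to the source ($N_k$ samples) and the target ($M$ samples) and using the triangle inequality for $W_1$ yields $W_1(\mu_{\mathcal{S}_k}, \mu_{\mathcal{T}}) \leq W_2(\hat{\mu}_{\mathcal{S}_k}, \hat{\mu}_{\mathcal{T}}) + \sqrt{2\log(1/\xi)/\zeta}\big(\sqrt{1/N_k}+\sqrt{1/M}\big)$, which is the concentration term in the statement. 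The subtlety is twofold: this guarantee must hold simultaneously across all $N$ domains, and it must be compatible with the mixing weights $w_k$ of Eq.~\eqref{eq:choose_w} being themselves data-dependent. I would resolve the former by a union bound, replacing $\xi$ by $\xi/N$ so that the failure probabilities sum to $\xi$ (the change only enters the logarithmic factor), and the latter by conditioning on the completed first adaptation stage so that the $w_k$ are fixed constants, keeping the randomness driving the concentration independent of the weights.

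Finally, substituting the per-domain estimate $e_{\mathcal{T}}(h_k) \leq e_{\mathcal{S}_k}(h_k) + W_2(\hat{\mu}_{\mathcal{T}},\hat{\mu}_{\mathcal{S}_k}) + \sqrt{2\log(1/\xi)/\zeta}\big(\sqrt{1/N_k}+\sqrt{1/M}\big) + e_{\mathcal{C}_k}(h_k^*)$ into the weighted sum from the first reduction, and using $\sum_k w_k = 1$, recovers Eq.~\eqref{eq:theorem}. I would close by noting that the adaptation objective combining $\mathcal{L}_{SL}$ with the $W_2$ alignment term directly drives down the first two summands of this bound for each $k$, which makes precise the claim that the algorithm minimizes an upper bound on the target error.
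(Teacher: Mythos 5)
Your proof follows essentially the same route as the paper's: the Jensen's-inequality reduction $e_{\mathcal{T}}(h) \le \sum_{k} w_k e_{\mathcal{T}}(h_k)$ is exactly the paper's Lemma~\ref{lemma:upper-bound}, and the per-domain bound you then assemble is Theorem~2 of \cite{redko2017theoretical}, which the paper invokes as a black box where you unpack its internals (Kantorovich--Rubinstein duality, $W_1 \le W_2$, and the $T_1(\zeta)$ concentration argument). Your added care with the union bound over the $N$ domains and with the data-dependence of the weights $w_k$ is in fact more rigorous than the paper's own proof, which applies the single-source bound to each domain separately without adjusting the confidence level.
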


\textit{\textbf{Proof:}}   the complete proof is included in the Appendix.

We observe in Eq.~\eqref{theorem:main} that  Algorithm \ref{algorithm} is designed to minimize the right-hand side of Equation \ref{eq:theorem}.
    For each source domain, we minimize the source expected error by initially pre-training the models using ERM on each source domain.
    While performing single-source UDA, the second term is minimized by reducing the distributional gap between the source domains and the target domain in the latent space. 
    The second to last term depends on the number of available samples in the adaptation problem. It becomes negligible when there are sufficient samples   for training.
    The final term quantifies that the classes are shared across all domains. For related domains, this term is   negligible.
By   minimizing these terms, our algorithm efficiently adapts the models to the target domain, leading to improved performance in the domain adaptation task.

\section{Experimental Validation}
\label{section:experiments}










Our code is available as a supplement. 
It contains   hyperparameter selection and the exact architecture we used.

\subsection{Experimental Setup}

\paragraph{Dataset:}  

We use the MICCAI 2016 MS lesion segmentation challenge dataset~\cite{commowick2021multiple} in our experiments. This dataset contains MRI images from   patients suffering from Multiple Sclerosis in which images contain hyperintense lesions on FLAIR. 
 The dataset incorporates images from   different clinical sites, each employing a different model of MRI scanner. This dataset has not been explored extensively in UDA setting but each site can be naturally modeled as a domain for form a multi-source UDA setting.  In our experiments, we assume that each site has contributed images from five patients for training and ten patients for testing. The dataset is divided into  training and a testing image sets.   Each patient's data includes high-quality segmentation maps derived from averaging manual annotations by  seven independent manual segmentation by expert radiologists. These maps present an invaluable resource for our experimentation, offering the possibility of evaluation against gold standard used in clinical settings.


\paragraph{Preprocessing \& Network Architecture:}  

 To maintain the integrity of our experiments, we have strictly used the test images solely for the testing phase, ensuring they were not used into any part of the training, validation, or adaptation processes. Following the literature on the MICCAI 2016 MS lesion segmentation challenge, we subjected the raw MRI images to several preliminary pre-processing procedures prior to using them as inputs for the segmentation network for enhanced performance. The procedures for each patient included (i) denoising of MRI images  using the non-local means algorithm~\cite{coupe2008optimized}, (ii) rigid registration in relation to the FLAIR modality, performed to  preserve the relative distance between every pair of points from the patient's anatomy  to achieve correspondence, (iii) skull-stripping to remove the skull and  non-brain tissues from the MRI images that are irrelevant to the task, and (iv) bias correction to reduce variance across the image. To accomplish these steps, we utilized Anima~\footnote{https://anima.irisa.fr/}, a publicly accessible toolkit for medical image processing developed by the Empenn research team at Inria Rennes2.
We employed a  3D-UNet architecture~\cite{isensee2018brain} as our segmentation model (please refer to the Appendices for the detailed architecture visualization) which is  an improved version of the original UNet architecture \cite{ronneberger2015unet} to benefit from spatial dependencies in all directions. 
To ensure uniformity across the dataset, images were resampled to share a consistent size of \(128 \times 128 \times 128\). From these images, 3D patches of size \(16 \times 16 \times 16\) were extracted with a patch overlap of 50\%, resulting in a total of 4,096 patches per image. Although using overlapping 3D patches contain more surrounding information for a voxel which in turn is memory demanding, but training on patches containing lesions allowed us to reduce training time because the inputs become smaller while simultaneously addressing the issue of class imbalance.

\paragraph{Evaluation:}  
Following the literature, we used the Dice score to measure the similarity between the generated results and the provided ground truth masks. It is a full reference measured defined as \( \frac{2 \cdot |X \cap Y|}{|X| + |Y|}\), where \(X\) and \(Y\) are the   segmentation masks of the predicted and ground truth images, respectively. The Dice score ranges from 0 to 1, where a score of 1 indicates perfect overlap and 0 signifies no overlap. This metric is particularly suitable for evaluating segmentation tasks, as it quantifies how well the segmented regions match the ground truth, accounting for both false positives and false negative scenarios. To make our comparisons statistically meaningful, we repeated our experiments five times and reported both the average performance.  

\paragraph{Baselines for Comparison:}  
There are not many prior works in the literature on the problem we explored. To provide a comprehensive evaluation of the proposed method and measure its competitiveness, we have set up a series of comparative baselines. These baselines have been selected not only to represent standard and popular strategies in image adaptation and prediction but also to highlight the uniqueness and advantages of our approach. Additionally, some of these baselines serve as ablative experiments that demonstrate all components of our algorithm are important for optimal performance. We use four baselines to compare with our methods: (i) \textbf{Source-Trained Model (SUDA)}:   It represents the performance of the best trained model using single-source UDA for target domain. This baselines serves as an ablative experiment because improvements over this baseline demonstrate the effectiveness of using multi-source UDA. (ii) \textbf{Popular Voting (PV)}: It represents assigning the label for each pixel based on the majority votes of the individual single-source adapted models. When the votes are equal, we assign the label randomly. Majority voting considers all the models to be used equally. Improvements over this baseline demonstrate the effectiveness of our ensemble technique because it is the simplest idea that comes to our mind. (iii) \textbf{Averaging (AV)}: Under this baseline, prediction image results from taking the   average prediction of the single-source adapted models. This method can be particularly useful when the predictions are continuous or when there's the same amount of uncertainty in individual model predictions. This baseline can also serve as an ablative experiments because improvements over this baseline demonstrate that treating all source domains equally and using  uniform combination weights is not an optimal strategy (iv) \textbf{SegJDOT}~\cite{ackaouy2020unsupervised}: to the best of our knowledge, this is the only prior comparable method in the literature that addresses multi-source UDA for semantic segmentation of MRI images. There are other multi-source UDA techniques but those methods are developed for classification tasks and adopting them for semantic segmentation is not trivial. This baseline is a multi-source UDA method which uses a different strategy to fuse information from several source domains based on re-weighting the adaptation loss for each single-source UDA problem alignment loss function and tuning the weights for optimal multi-source performance. A benefit that our approach offers compared to SegJDOT is that we do not need simultaneous access to all source domain data.

\subsection{Comparative and Ablative Experiments}

\begin{table}[h!]
\centering
 \subfloat[Source 1 \& Source 8]{
        \adjustbox{max width= \textwidth}{
            \setlength\tabcolsep{5pt}
\begin{tabular}{|l|c|} 
\hline
\textbf{Method} & \textbf{→ 07} \\
\hline
SUDA   & 0.199  \\
\hline
PV & 0.022  \\
\hline
AV & 0.103  \\
\hline
SegJDOT & 0.315  \\
\hline
FMUDA   &  0.455  \\
\hline

\end{tabular}
}}\hspace{-3mm}
    \subfloat[Source 1 \& Source 7]{
        \adjustbox{max width= \textwidth}{
            \setlength\tabcolsep{3pt}
\begin{tabular}{|l|c|} 
\hline
\textbf{Method} & \textbf{→ 08}  \\
\hline
SUDA   & 0.249  \\
\hline
PV & 0.152  \\
\hline
AV & 0.068  \\
\hline
SegJDOT   & 0.418  \\
\hline
FMUDA   &  0.405  \\
\hline
\end{tabular}
}}\hspace{-3mm}
    \subfloat[Source 7 \& Source 8]{
        \adjustbox{max width= \textwidth}{
            \setlength\tabcolsep{5pt}
\begin{tabular}{|l|c|} 
\hline
\textbf{Method} & \textbf{→ 01}  \\
\hline
SUDA   & 0.101  \\
\hline
PV & 0.017  \\
\hline
AV & 0.029  \\
\hline
SegJDOT   & 0.385  \\
\hline
FMUDA  &  0.425 \\
\hline
\end{tabular}}}
\caption{Performance comparison (in terms of DICE metric) for multi-source UDA problems defined on the MICCAI 2016 MS lesion segmentation challenge dataset.}
\label{tab:comparison_3}
\end{table}

Table~\ref{tab:comparison_3} provides an overview of our comparative results. We have provided results for all the three possible multi-source UDA problems, wherein each instance involves designating two domains as source domains and the third domain in the dataset as the target domain. We report the downstream performance on the target domain for each UDA problem in  Table~\ref{tab:comparison_3}. We have followed the original dataset to use ``01'', ``07'', and ``08'' to refer to the domains (sources) in the dataset.
Upon careful examination, it is evident   FMUDA  stands out by delivering state-of-the-art (SOTA) performance across all the three multi-source UDA tasks. Particularly, improvements over SUDA is significant which demonstrate the advantage of our approach.
A notable finding is also the substantial performance gap between FMUDA and PV or AV. This discrepancy serves as compelling evidence for the effectiveness and indispensability of our ensemble approach in ensuring superior model performance. It emphasizes that the careful integration of information from multiple source domains, as facilitated by FMUDA, contributes significantly to overall multi-domain UDA successful strategy. The comparison between PV and AV against SUDA reveals that multi-source UDA is not inherently a superior method when aggregation is not executed properly. PV and AV exhibit underperformance  in comparison to SUDA, emphasizing the importance of a well-crafted aggregation strategy in realizing the potential benefits of multi-source UDA to mitigate the effect of negative knowledge transfer. Underperformance compared SUDA suggests that interference between source domains is a major challenge that needs to be addressed in multi-source UDA. SegJDOT addresses this challenge and exhibits a better performance but not as good as FMUDA. We think this superiority stems from the fact that FMUDA uses distinct models for each source domain.
In summary, our findings suggest that our   FMUDA   is not only a competitive  method but also compares favorably against alternative methods.

\begin{figure}[ht]
    \centering
    \includegraphics[width=1\textwidth]{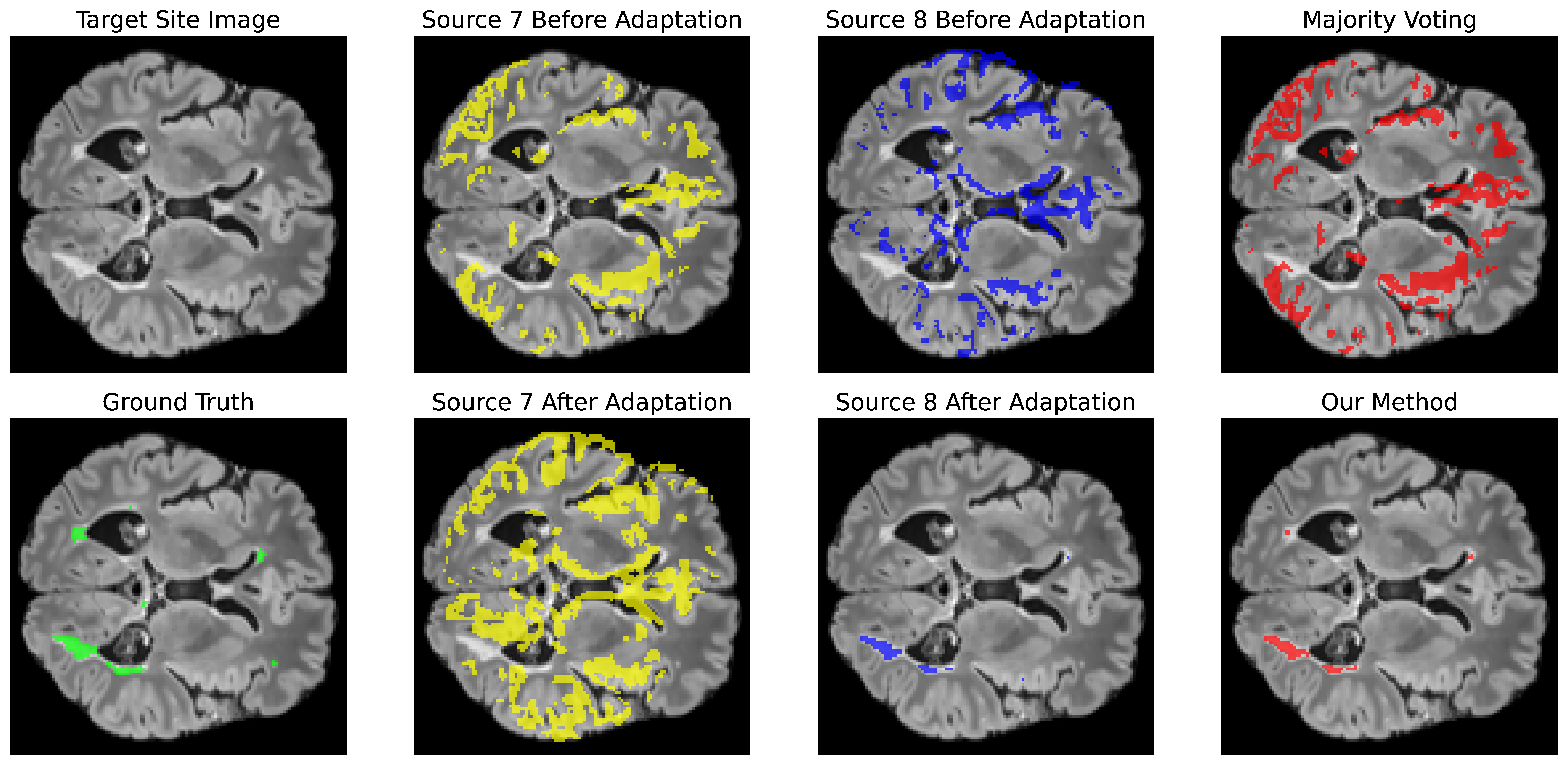}
    \caption{Segmentation masks generated for a sample MRI image when Source ``01'' is used as the source domain in UDA. In each figure, the colored area shows the mask generated by each UDA model.}
    \label{fig:segmentation-sample}
\end{figure}

To offer a more intuitive comparison and provide a deeper insight about the comparative experiments, Figure \ref{fig:segmentation-sample} showcases segmentation results along with the original segmentation mask of radiologists when Source ``01'' is served as the target domain and Sources ``07'' and ``08'' are used as the UDA source domains.  Through inspecting the second and the third columns, we  note that  the performance  of the single-source UDA methods is quite different. While source ``08'' leads to a decent performance,  source ``07'' does not lead to a good UDA performance. This observation is not surprising because UDA is effective when the source and the target domain share distributional similarities and  this example suggests that source ``07'' is not a good source domain to perform segmentation in on source ``01''. We can understand why the best single-source UDA method can have a better performance. Additionally, this example demonstrates that as opposed to intuition, using more source domains does not necessarily lead to improved UDA performance due to the possibility of negative knowledge transfer across the domains. In  situations in which the source domains are diverse, aggregation techniques such as averaging or majority vote are not going to be very effective because unintentionally we will give a high contribution to the   source domains with low-performance when generating the aggregated mask. Hence, it is possible that the aggregated performance is dominated by the worse single-UDA performance. It is even possible to have a performance  less than all single-source UDA models when individual single-source UDA domain models lead to inconsistent predictions. Note that majority voting also can fail because the majority of the models can potentially be low-confidence models. In other words, multi-source UDA should be performed such that good source domains contribute the most when the aggregated mask is generated. In the absence of such a strategy, the multi-source UDA performance can even lead to a lower performance than single-source UDA. The strength of FMUDA is that, as it can be seen in Figure \ref{fig:segmentation-sample}, it can aggregate the generated single-source UDA masks such that the aggregated mask would become better than the mask generated by each of the single-source UDA models. For example, although Source ``08'' model leads to a relatively good performance, it misses to segment two regions in the upper-half of the brain image. The multi-source UDA model, can at least partially include these regions using the ``07'' model. This improvement stems from using the ``8'' domain which is confident on those regions.

To offer an intuitive insight about the way that our approach works, Figure \ref{fig:umap-2} illustrates the affect of domain alignment on the geometry of the data representations in the shared embedding space. In this figure, we have reduced the dimension of data representations in the shared embedding space using UMAP tool~\cite{mcinnes2018umap} to two for visualization purpose. In this figure, we showcase  the latent embeddings of data points for the source domain (Source ``08'' in the dataset) and the target domain (Source ``01'' in the dataset) both before and after adaptation to study the impact of single-source UDA on the geometry of data representations. Each point in the figure corresponds to a pixel. Through careful visual inspection, we see that FMUDA effectively minimizes the distance between the empirical distributions  of the target domain  and the source domain after adaptation, leading to learning a domain-agnostic embedding space at the output-space of the encoder.  
Although the eventual mask is generated by aggregating several models, alignment of  single-source UDA distribution pairs can   translate into an enhanced collective performance because each model become more confident after performing single-source UDA. The empirical evidence reinforces the theoretical basis of our approach because according to Eq. \ref{eq:theorem}, minimizing pair-wise distributional distances tightens the upperbound in  Eq. \ref{eq:theorem}. This experiment highlights the efficacy of FMUDA in facilitating domain adaptation and improving the overall performance   across diverse domains.

\begin{figure} 
\centering
    \includegraphics[width= \textwidth]{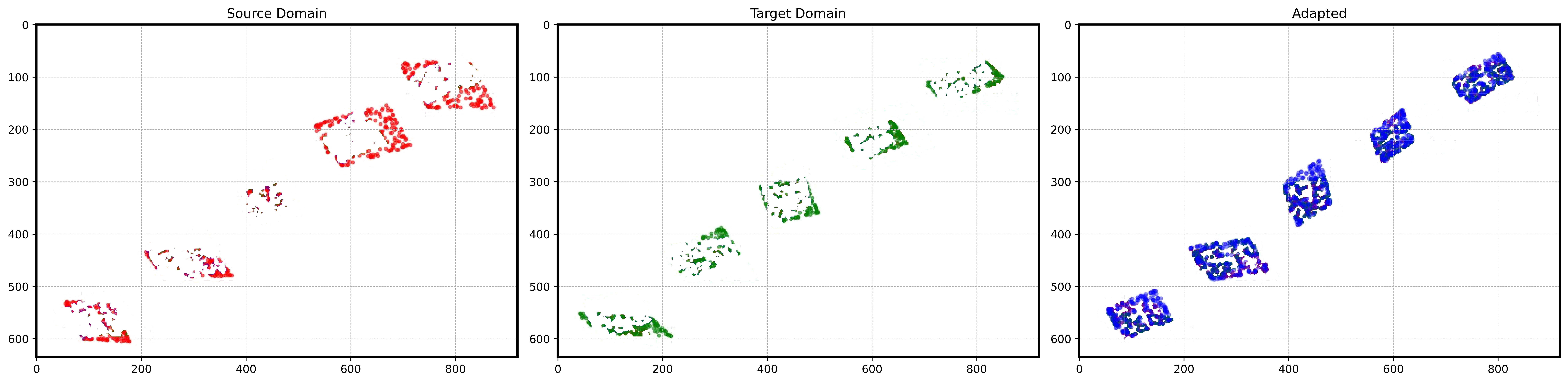}
    \caption{Distribution matching in the embedding space: we use UMAP for visualization of data representations when Source ``07'' in the dataset is used as the UDA source domain and Source ``01'' of the dataset is used as  the UDA target domain: (Left) source domain; (Center) target domain prior to single-source model adaptation; and (Right) target domain after single-source model adaptation}
    \label{fig:umap-2}
\end{figure}

In addition to the   exploration of multi-source UDA setting, we conducted   single-domain UDA experiments and compared our results against SegJDOT, showcasing the competitiveness of our proposed approach in this   scenario. The results of these experiments are summarized in Table \ref{tab:ensemble_techniques}, where we present performance results for six distinct pairwise single-source UDA problems defined on the dataset. To ensure a fair evaluation, we aligned the training/testing pairs with those used in SegJDOT.
The   observation from the results is that our proposed approach consistently outperformed SegJDOT. Notably, when considering the average DICE score across these tasks, our approach exhibited a remarkable $\approx 20\%$ improvement over the   SegJDOT.
This heightened performance is because   SegJDOT relies on optimal transport for domain alignment, but our approach leverages SWD for distribution alignment. The inherent characteristics of SWD contribute to the improved adaptability and effectiveness of our method. 
This experiment demonstrate a second angle of our novelty in using SWD for solving UDA for semantic segmentation. Our proposed  is   a    competitive method for single-source UDA for problems involving semantic segmentation. 
These results indicates that our improved performance in the case of multi-source UDA also stems from performing single-source UDA better. 

\begin{table}[h]
\centering
 \subfloat[Source 1]{
        \adjustbox{max width=.5\textwidth}{
            \setlength\tabcolsep{3pt}
\begin{tabular}{|l|c|c|c|c|c|} 
\hline
\textbf{Method} &  \textbf{→ 07} & \textbf{→ 08} & \textbf{Avg.}  \\
\hline
Pre-Adapt &  0.090 & 0.430 & 0.260 \\
\hline
SegJDOT &  0.110 & 0.470 & 0.290 \\
\hline
FMUDA &  0.452 & 0.418 & 0.435  \\
\hline
\end{tabular}
}}
\hspace{5mm}
    \subfloat[Source 7]{
        \adjustbox{max width=.5\textwidth}{
            \setlength\tabcolsep{3pt}
\begin{tabular}{|l|c|c|c|c|c|} 
\hline
\textbf{Method} &  \textbf{→ 01} & \textbf{→ 08} & \textbf{Avg.}  \\
\hline
Pre-Adapt &  0.430 & 0.390 & 0.410 \\
\hline
SegJDOT &  0.450 & 0.440 & 0.445 \\
\hline
FMUDA &  0.484 & 0.442 & 0.463 \\
\hline
\end{tabular}}}
\hspace{5mm}
    \subfloat[Source 8]{
        \adjustbox{max width=.5\textwidth}{
            \setlength\tabcolsep{3pt}
\begin{tabular}{|l|c|c|c|c|c|} 
\hline
\textbf{Method} &  \textbf{→ 01} & \textbf{→ 07} & \textbf{Avg.}  \\
\hline
Pre-Adapt &  0.350 & 0.070 & 0.210 \\
\hline
SegJDOT &  0.450 & 0.290 & 0.370 \\
\hline
FMUDA &  0.483 & 0.458 & 0.471  \\
\hline
\end{tabular}
}}
\caption{Performance comparison (in terms of DICE metric) for single-source UDA tasks defined on the MICCAI 2016 MS lesion segmentation challenge dataset.}
\label{tab:ensemble_techniques}
\end{table}

\vspace{1em}

\subsection{Analytic Experiments}

In Figure \ref{fig:loss-curve}, we first study the dynamics of our adaptation strategy on the model performance under the utilization of Source ``01'' as the source domain. In this figure, we have visualized the training loss and the target domain performance versus training epochs. We observe a consistent pattern in both domains: pre-training on the source domain consistently enhances performance in the target domains due to cross-domain similarities. Furthermore, a notable uptick in target domain accuracies becomes evident as the adaptation process initiates. This observation aligns well with our theoretical framework, wherein the augmentation of target accuracy corresponds to the concurrent reduction in the distributional discrepancy loss.

\begin{figure}[H]
    \centering
    
    \subfloat[Target ``07'']{%
        \includegraphics[width=0.45\textwidth]{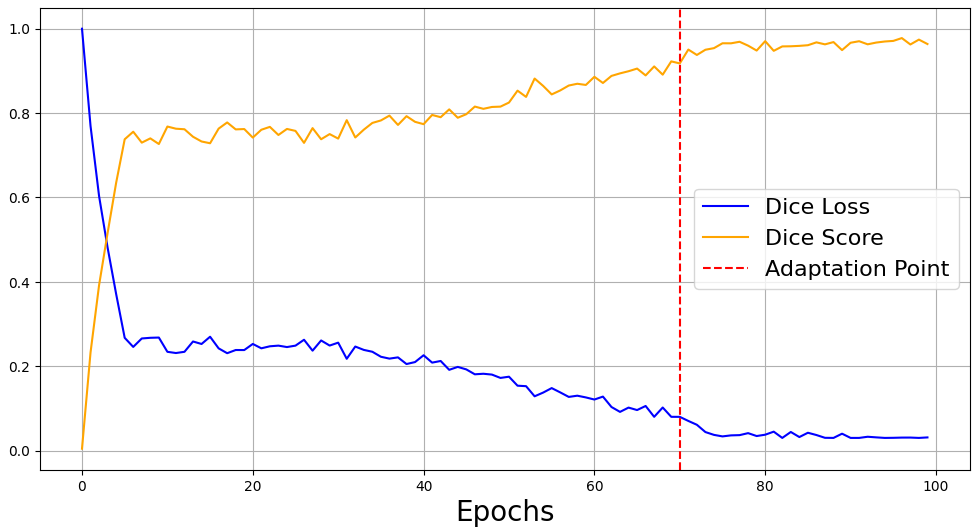}
        \label{fig:target7}
    }
    \hfill
    \subfloat[Target ``08'']{%
        \includegraphics[width=0.45\textwidth]{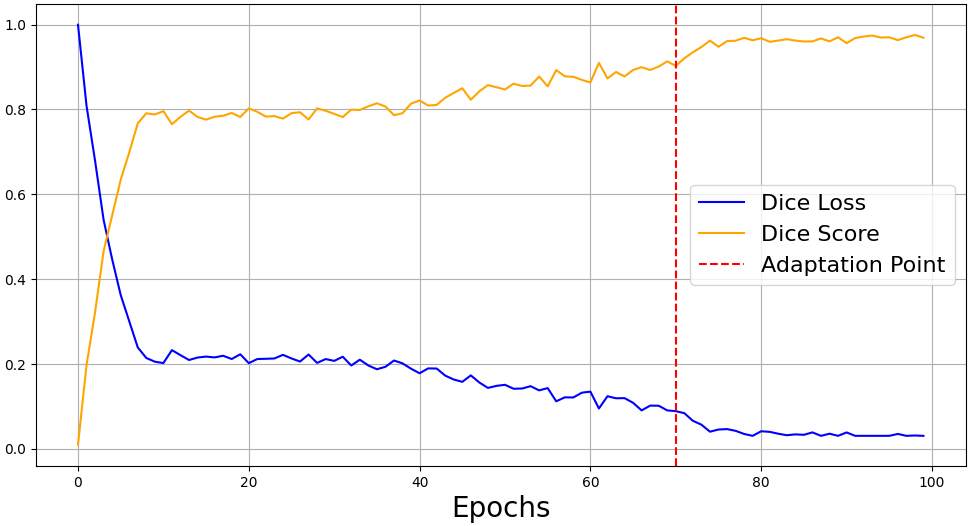}
        \label{fig:target8}
    }
    
    \caption{Effect of the pretraining and adaptation process on the target domain performance (yellow curve) and the training loss (blue curve).}
    \label{fig:loss-curve}
\end{figure}

  Finally, we studied the sensitivity of our performance with respect to major hyperparameters that we have.
We study the effect of the value of confidence parameter $ lambda$ on the downstream performance. This parameters acts as a threshold to filter out noises in images from multiple site. To this end, we have measured the model performance versus the value of $\lambda$ on each target domain. Figure \ref{fig:lamda-dice} presents the results for this study. We observe that the value for this parameter is important and selecting it properly is very important. Based on the observations, we conclude $\lambda = 0.3$ is a suitable initial value for this parameter in our experiments. We can also use the validation set to tune this parameter for an optimal performance.

\begin{figure}[H]
    \centering
    
    \subfloat[Source ``01'']{%
        \includegraphics[width=0.3\textwidth]{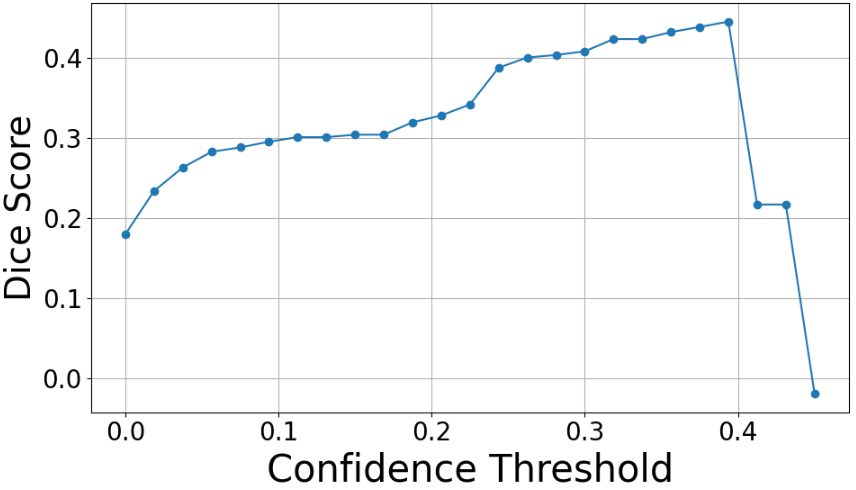}
        \label{fig:source1}
    }
    \hfill
    \subfloat[Source ``07'']{%
        \includegraphics[width=0.3\textwidth]{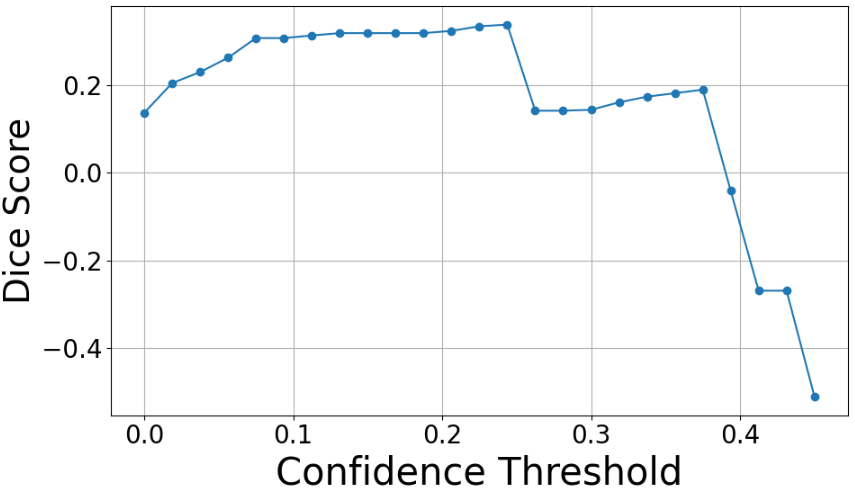}
        \label{fig:source7}
    }
    \hfill
    \subfloat[Source ``08'']{%
        \includegraphics[width=0.3\textwidth]{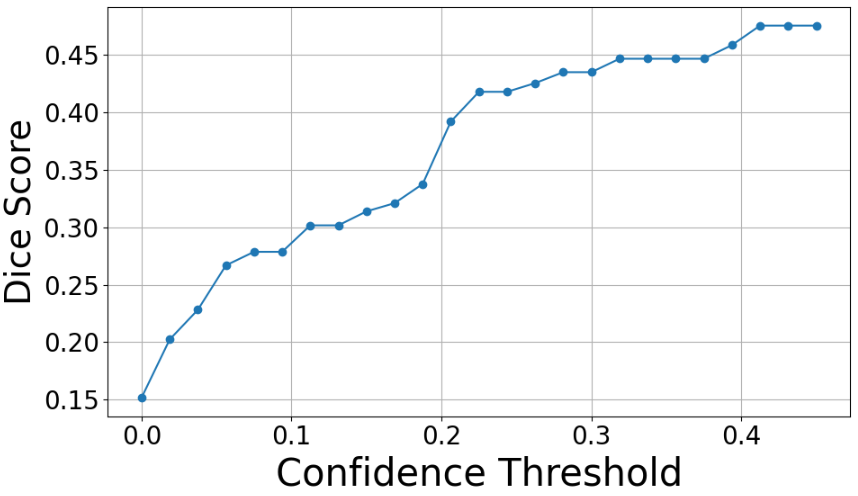}
        \label{fig:source8}
    }
    
    \caption{Mode Performance versus the value for the hyperparameter $\lambda$.}
    \label{fig:lamda-dice}
\end{figure}

We also  investigate the influence of the SWD projection hyper-parameter, denoted as $L$ in definition of SWD in Equation \ref{eq:w2_align}.  While a larger value of $L$ results in a more precise approximation of the SWD metric, it also comes with the drawback of increased computational load to compute SWD. Our objective is to determine whether there exists a range of $L$ values that provides satisfactory adaptation performance and to scrutinize the impact of this parameter. To this end, we use two UDA tasks, as illustrated in Figure \ref{fig:projections}.
We present our findings based on a range of $L$ values  $L\in{1, 25, 50, 100, 150, 200, 250}$. As anticipated, tightening the SWD approximation by increasing the number of projections results in improved performance. However, we observe that beyond a certain threshold, approximately when $L \approx 50$, the performance gains become marginal and the algorithm becomes almost insensitive. Consequently,     $L=50$ is a good choice for this particular hyper-parameter to  balance the computational efficiency and adaptation performance.

\begin{figure}[H]
    \centering
    
    \subfloat[Target ``07'']{%
        \includegraphics[width=0.40\textwidth]{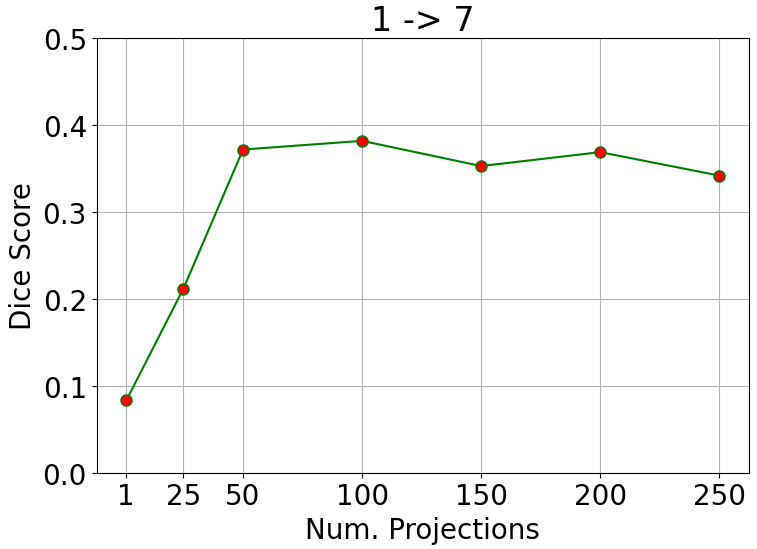}
        \label{fig:t7}
    }
    \hfill
    \subfloat[Target ``08'']{%
        \includegraphics[width=0.40\textwidth]{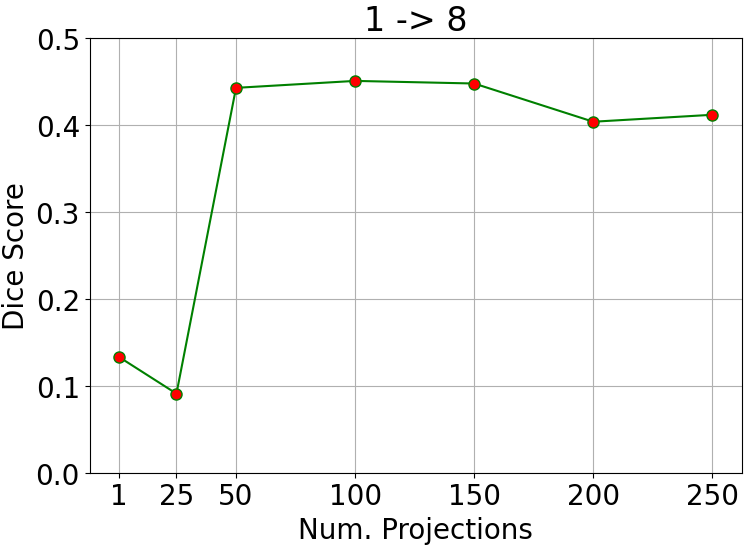}
        \label{fig:t8}
    }
    
    \caption{Performance in target domain versus the number of projections used in computing SWD.}
    \label{fig:projections}
\end{figure}

\section{Conclusion}
 We developed a multi-source UDA method for segmentation of medical images, when the source domain images are distributed. Our algorithm is a two-stage algorithm. In the first stage, we use SWD metric to match the distributions of the source and the target domain in a shared embedding space modeled as the output of a shared encoder. As a result, we will have one adapted model per each target-source domain pair. In the second stage, the segmentation masks generated by these models are aggregated based on the reliability of each model to build a final segmentation map that is more accurate than all the individually generated single-source UDA masks. The validity of our algorithm is supported by both theoretical analysis and experimental results on real-world medical images. Our experiments showcase the competitive performance of our algorithm when compared to SOTA alternatives. Our algorithm also maintains data privacy across the source domains because source domains do not share data. Future endeavors involve exploring scenarios where the data for source domains is fully private and cannot be shared with the target domain.

 
\clearpage

\bibliographystyle{tmlr}
\bibliography{ref}

\begin{thebibliography}{74}
\providecommand{\natexlab}[1]{#1}
\providecommand{\url}[1]{\texttt{#1}}
\expandafter\ifx\csname urlstyle\endcsname\relax
  \providecommand{\doi}[1]{doi: #1}\else
  \providecommand{\doi}{doi: \begingroup \urlstyle{rm}\Url}\fi

\bibitem[Ackaouy et~al.(2020)Ackaouy, Courty, Vall{\'e}e, Commowick, Barillot,
  and Galassi]{ackaouy2020unsupervised}
Antoine Ackaouy, Nicolas Courty, Emmanuel Vall{\'e}e, Olivier Commowick,
  Christian Barillot, and Francesca Galassi.
\newblock Unsupervised domain adaptation with optimal transport in multi-site
  segmentation of multiple sclerosis lesions from mri data.
\newblock \emph{Frontiers in computational neuroscience}, 14:\penalty0 19,
  2020.

\bibitem[Ahmad et~al.(2021)Ahmad, Qamar, Shen, and Saeed]{ahmad2021context}
Parvez Ahmad, Saqib Qamar, Linlin Shen, and Adnan Saeed.
\newblock Context aware 3d unet for brain tumor segmentation.
\newblock In \emph{Brainlesion: Glioma, Multiple Sclerosis, Stroke and
  Traumatic Brain Injuries: 6th International Workshop, BrainLes 2020, Held in
  Conjunction with MICCAI 2020, Lima, Peru, October 4, 2020, Revised Selected
  Papers, Part I 6}, pp.\  207--218. Springer, 2021.

\bibitem[Al~Chanti \& Mateus(2021)Al~Chanti and Mateus]{al2021olva}
Dawood Al~Chanti and Diana Mateus.
\newblock Olva: O ptimal l atent v ector a lignment for unsupervised domain
  adaptation in medical image segmentation.
\newblock In \emph{International Conference on Medical Image Computing and
  Computer-Assisted Intervention}, pp.\  261--271. Springer, 2021.

\bibitem[Arsalan et~al.(2019)Arsalan, Owais, Mahmood, Cho, and
  Park]{arsalan2019aiding}
Muhammad Arsalan, Muhammad Owais, Tahir Mahmood, Se~Woon Cho, and Kang~Ryoung
  Park.
\newblock Aiding the diagnosis of diabetic and hypertensive retinopathy using
  artificial intelligence-based semantic segmentation.
\newblock \emph{Journal of clinical medicine}, 8\penalty0 (9):\penalty0 1446,
  2019.

\bibitem[Bhushan~Damodaran et~al.(2018)Bhushan~Damodaran, Kellenberger,
  Flamary, Tuia, and Courty]{damodaran2018deepjdot}
Bharath Bhushan~Damodaran, Benjamin Kellenberger, R{\'e}mi Flamary, Devis Tuia,
  and Nicolas Courty.
\newblock Deepjdot: Deep joint distribution optimal transport for unsupervised
  domain adaptation.
\newblock In \emph{Proceedings of the European Conference on Computer Vision
  (ECCV)}, pp.\  447--463, 2018.

\bibitem[Biasetton et~al.(2019)Biasetton, Michieli, Agresti, and
  Zanuttigh]{biasetton2019unsupervised}
Matteo Biasetton, Umberto Michieli, Gianluca Agresti, and Pietro Zanuttigh.
\newblock Unsupervised domain adaptation for semantic segmentation of urban
  scenes.
\newblock In \emph{Proceedings of the IEEE/CVF Conference on Computer Vision
  and Pattern Recognition Workshops}, pp.\  0--0, 2019.

\bibitem[Carreira et~al.(2012)Carreira, Caseiro, Batista, and
  Sminchisescu]{carreira2012semantic}
Joao Carreira, Rui Caseiro, Jorge Batista, and Cristian Sminchisescu.
\newblock Semantic segmentation with second-order pooling.
\newblock In \emph{Computer Vision--ECCV 2012: 12th European Conference on
  Computer Vision, Florence, Italy, October 7-13, 2012, Proceedings, Part VII
  12}, pp.\  430--443. Springer, 2012.

\bibitem[Commowick et~al.(2021)Commowick, Kain, Casey, Ameli, Ferr{\'e},
  Kerbrat, Tourdias, Cervenansky, Camarasu-Pop, Glatard,
  et~al.]{commowick2021multiple}
Olivier Commowick, Micha{\"e}l Kain, Romain Casey, Roxana Ameli,
  Jean-Christophe Ferr{\'e}, Anne Kerbrat, Thomas Tourdias, Fr{\'e}d{\'e}ric
  Cervenansky, Sorina Camarasu-Pop, Tristan Glatard, et~al.
\newblock Multiple sclerosis lesions segmentation from multiple experts: The
  miccai 2016 challenge dataset.
\newblock \emph{Neuroimage}, 244:\penalty0 118589, 2021.

\bibitem[Coup{\'e} et~al.(2008)Coup{\'e}, Yger, Prima, Hellier, Kervrann, and
  Barillot]{coupe2008optimized}
Pierrick Coup{\'e}, Pierre Yger, Sylvain Prima, Pierre Hellier, Charles
  Kervrann, and Christian Barillot.
\newblock An optimized blockwise nonlocal means denoising filter for 3-d
  magnetic resonance images.
\newblock \emph{IEEE transactions on medical imaging}, 27\penalty0
  (4):\penalty0 425--441, 2008.

\bibitem[Cui et~al.(2021)Cui, Yuwen, Jiang, Xia, and
  Zhang]{cui2021bidirectional}
Hengfei Cui, Chang Yuwen, Lei Jiang, Yong Xia, and Yanning Zhang.
\newblock Bidirectional cross-modality unsupervised domain adaptation using
  generative adversarial networks for cardiac image segmentation.
\newblock \emph{Computers in Biology and Medicine}, 136:\penalty0 104726, 2021.

\bibitem[Dhouib et~al.(2020)Dhouib, Redko, and Lartizien]{dhouib2020margin}
Sofien Dhouib, Ievgen Redko, and Carole Lartizien.
\newblock Margin-aware adversarial domain adaptation with optimal transport.
\newblock In \emph{Thirty-seventh International Conference on Machine
  Learning}, 2020.

\bibitem[Du et~al.(2020)Du, Cao, Liang, Chen, and Zhan]{du2020medical}
Getao Du, Xu~Cao, Jimin Liang, Xueli Chen, and Yonghua Zhan.
\newblock Medical image segmentation based on u-net: A review.
\newblock \emph{Journal of Imaging Science \& Technology}, 64\penalty0 (2),
  2020.

\bibitem[Gong et~al.(2021)Gong, Dai, Chen, Li, and Van~Gool]{gong2021mdalu}
Rui Gong, Dengxin Dai, Yuhua Chen, Wen Li, and Luc Van~Gool.
\newblock mdalu: Multi-source domain adaptation and label unification with
  partial datasets.
\newblock In \emph{Proceedings of the IEEE/CVF International Conference on
  Computer Vision}, pp.\  8876--8885, 2021.

\bibitem[Goodfellow et~al.(2014)Goodfellow, Pouget-Abadie, Mirza, Xu,
  Warde-Farley, Ozair, Courville, and Bengio]{goodfellow2014generative}
Ian Goodfellow, Jean Pouget-Abadie, Mehdi Mirza, Bing Xu, David Warde-Farley,
  Sherjil Ozair, Aaron Courville, and Yoshua Bengio.
\newblock Generative adversarial nets.
\newblock In \emph{Advances in Neural Information Processing Systems}, pp.\
  2672--2680, 2014.

\bibitem[Guo et~al.(2018)Guo, Shah, and Barzilay]{guo2018multi}
Jiang Guo, Darsh Shah, and Regina Barzilay.
\newblock Multi-source domain adaptation with mixture of experts.
\newblock In \emph{Proceedings of the 2018 Conference on Empirical Methods in
  Natural Language Processing}, pp.\  4694--4703, 2018.

\bibitem[Hatamizadeh et~al.(2021)Hatamizadeh, Nath, Tang, Yang, Roth, and
  Xu]{hatamizadeh2021swin}
Ali Hatamizadeh, Vishwesh Nath, Yucheng Tang, Dong Yang, Holger~R Roth, and
  Daguang Xu.
\newblock Swin unetr: Swin transformers for semantic segmentation of brain
  tumors in mri images.
\newblock In \emph{International MICCAI Brainlesion Workshop}, pp.\  272--284.
  Springer, 2021.

\bibitem[He et~al.(2021)He, Jia, Chen, and Liu]{he2021multi}
Jianzhong He, Xu~Jia, Shuaijun Chen, and Jianzhuang Liu.
\newblock Multi-source domain adaptation with collaborative learning for
  semantic segmentation.
\newblock In \emph{Proceedings of the IEEE/CVF Conference on Computer Vision
  and Pattern Recognition}, pp.\  11008--11017, 2021.

\bibitem[Hoffman et~al.(2018)Hoffman, Tzeng, Park, Zhu, Isola, Saenko, Efros,
  and Darrell]{hoffman2018cycada}
Judy Hoffman, Eric Tzeng, Taesung Park, Jun-Yan Zhu, Phillip Isola, Kate
  Saenko, Alexei Efros, and Trevor Darrell.
\newblock Cycada: Cycle-consistent adversarial domain adaptation.
\newblock In \emph{International conference on machine learning}, pp.\
  1989--1998. PMLR, 2018.

\bibitem[Hung et~al.(2018)Hung, Tsai, Liou, Lin, and Yang]{hung2018adversarial}
Wei-Chih Hung, Yi-Hsuan Tsai, Yan-Ting Liou, Yen-Yu Lin, and Ming-Hsuan Yang.
\newblock Adversarial learning for semi-supervised semantic segmentation.
\newblock \emph{arXiv preprint arXiv:1802.07934}, 2018.

\bibitem[Isensee et~al.(2018)Isensee, Kickingereder, Wick, Bendszus, and
  Maier-Hein]{isensee2018brain}
Fabian Isensee, Philipp Kickingereder, Wolfgang Wick, Martin Bendszus, and
  Klaus~H Maier-Hein.
\newblock Brain tumor segmentation and radiomics survival prediction:
  Contribution to the brats 2017 challenge.
\newblock In \emph{Brainlesion: Glioma, Multiple Sclerosis, Stroke and
  Traumatic Brain Injuries: Third International Workshop, BrainLes 2017, Held
  in Conjunction with MICCAI 2017, Quebec City, QC, Canada, September 14, 2017,
  Revised Selected Papers 3}, pp.\  287--297. Springer, 2018.

\bibitem[I{\c{s}}{\i}n et~al.(2016)I{\c{s}}{\i}n, Direko{\u{g}}lu, and
  {\c{S}}ah]{icsin2016review}
Ali I{\c{s}}{\i}n, Cem Direko{\u{g}}lu, and Melike {\c{S}}ah.
\newblock Review of mri-based brain tumor image segmentation using deep
  learning methods.
\newblock \emph{Procedia Computer Science}, 102:\penalty0 317--324, 2016.

\bibitem[Javanmardi \& Tasdizen(2018)Javanmardi and
  Tasdizen]{javanmardi2018domain}
Mehran Javanmardi and Tolga Tasdizen.
\newblock Domain adaptation for biomedical image segmentation using adversarial
  training.
\newblock In \emph{2018 IEEE 15th International Symposium on Biomedical Imaging
  (ISBI 2018)}, pp.\  554--558. IEEE, 2018.

\bibitem[Jian \& Rostami(2023)Jian and Rostami]{jian2023unsupervised}
Dayuan Jian and Mohammad Rostami.
\newblock Unsupervised domain adaptation for training event-based networks
  using contrastive learning and uncorrelated conditioning.
\newblock In \emph{Proceedings of the IEEE/CVF international conference on
  computer vision}, 2023.

\bibitem[Jolesz et~al.(2001)Jolesz, Nabavi, and Kikinis]{jolesz2001integration}
Ferenc~A Jolesz, Arya Nabavi, and Ron Kikinis.
\newblock Integration of interventional mri with computer-assisted surgery.
\newblock \emph{Journal of Magnetic Resonance Imaging}, 13\penalty0
  (1):\penalty0 69--77, 2001.

\bibitem[Karayegen \& Aksahin(2021)Karayegen and Aksahin]{karayegen2021brain}
G{\"o}kay Karayegen and Mehmet~Feyzi Aksahin.
\newblock Brain tumor prediction on mr images with semantic segmentation by
  using deep learning network and 3d imaging of tumor region.
\newblock \emph{Biomedical Signal Processing and Control}, 66:\penalty0 102458,
  2021.

\bibitem[Kickingereder et~al.(2019)Kickingereder, Isensee, Tursunova, Petersen,
  Neuberger, Bonekamp, Brugnara, Schell, Kessler, Foltyn,
  et~al.]{kickingereder2019automated}
Philipp Kickingereder, Fabian Isensee, Irada Tursunova, Jens Petersen, Ulf
  Neuberger, David Bonekamp, Gianluca Brugnara, Marianne Schell, Tobias
  Kessler, Martha Foltyn, et~al.
\newblock Automated quantitative tumour response assessment of mri in
  neuro-oncology with artificial neural networks: a multicentre, retrospective
  study.
\newblock \emph{The Lancet Oncology}, 20\penalty0 (5):\penalty0 728--740, 2019.

\bibitem[Kruggel et~al.(2010)Kruggel, Turner, Muftuler, Initiative,
  et~al.]{kruggel2010impact}
Frithjof Kruggel, Jessica Turner, L~Tugan Muftuler, Alzheimer's
  Disease~Neuroimaging Initiative, et~al.
\newblock Impact of scanner hardware and imaging protocol on image quality and
  compartment volume precision in the adni cohort.
\newblock \emph{Neuroimage}, 49\penalty0 (3):\penalty0 2123--2133, 2010.

\bibitem[Levinski et~al.(2009)Levinski, Sourin, and
  Zagorodnov]{levinski2009interactive}
Konstantin Levinski, Alexei Sourin, and Vitali Zagorodnov.
\newblock Interactive surface-guided segmentation of brain mri data.
\newblock \emph{Computers in Biology and Medicine}, 39\penalty0 (12):\penalty0
  1153--1160, 2009.

\bibitem[Li et~al.(2018)Li, Murias, Dawson, and Carlson]{NEURIPS2018_2fd0fd3e}
Yitong Li, michael Murias, geraldine Dawson, and David~E Carlson.
\newblock Extracting relationships by multi-domain matching.
\newblock In S.~Bengio, H.~Wallach, H.~Larochelle, K.~Grauman, N.~Cesa-Bianchi,
  and R.~Garnett (eds.), \emph{Advances in Neural Information Processing
  Systems}, volume~31. Curran Associates, Inc., 2018.
\newblock URL
  \url{https://proceedings.neurips.cc/paper/2018/file/2fd0fd3efa7c4cfb034317b21f3c2d93-Paper.pdf}.

\bibitem[Liu \& Guo(2015)Liu and Guo]{liu2015new}
Jianwei Liu and Lei Guo.
\newblock A new brain mri image segmentation strategy based on wavelet
  transform and k-means clustering.
\newblock In \emph{2015 IEEE International Conference on Signal Processing,
  Communications and Computing (ICSPCC)}, pp.\  1--4. IEEE, 2015.

\bibitem[Long et~al.(2015{\natexlab{a}})Long, Shelhamer, and
  Darrell]{long2015fully}
Jonathan Long, Evan Shelhamer, and Trevor Darrell.
\newblock Fully convolutional networks for semantic segmentation.
\newblock In \emph{Proceedings of the IEEE conference on computer vision and
  pattern recognition}, pp.\  3431--3440, 2015{\natexlab{a}}.

\bibitem[Long et~al.(2015{\natexlab{b}})Long, Cao, Wang, and
  Jordan]{long2015learning}
Mingsheng Long, Yue Cao, Jianmin Wang, and Michael Jordan.
\newblock Learning transferable features with deep adaptation networks.
\newblock In \emph{Proceedings of International Conference on Machine
  Learning}, pp.\  97--105, 2015{\natexlab{b}}.

\bibitem[Long et~al.(2017)Long, Zhu, Wang, and Jordan]{pmlr-v70-long17a}
Mingsheng Long, Han Zhu, Jianmin Wang, and Michael~I. Jordan.
\newblock Deep transfer learning with joint adaptation networks.
\newblock In Doina Precup and Yee~Whye Teh (eds.), \emph{Proceedings of the
  34th International Conference on Machine Learning}, volume~70 of
  \emph{Proceedings of Machine Learning Research}, pp.\  2208--2217. PMLR,
  06--11 Aug 2017.
\newblock URL \url{http://proceedings.mlr.press/v70/long17a.html}.

\bibitem[Luc et~al.(2016)Luc, Couprie, Chintala, and Verbeek]{luc2016semantic}
Pauline Luc, Camille Couprie, Soumith Chintala, and Jakob Verbeek.
\newblock Semantic segmentation using adversarial networks.
\newblock In \emph{NIPS Workshop on Adversarial Training}, 2016.

\bibitem[Maji et~al.(2022)Maji, Sigedar, and Singh]{maji2022attention}
Dhiraj Maji, Prarthana Sigedar, and Munendra Singh.
\newblock Attention res-unet with guided decoder for semantic segmentation of
  brain tumors.
\newblock \emph{Biomedical Signal Processing and Control}, 71:\penalty0 103077,
  2022.

\bibitem[McInnes et~al.(2018)McInnes, Healy, Saul, and
  Gro{\ss}berger]{mcinnes2018umap}
Leland McInnes, John Healy, Nathaniel Saul, and Lukas Gro{\ss}berger.
\newblock {UMAP}: Uniform manifold approximation and projection.
\newblock \emph{Journal of Open Source Software}, 3\penalty0 (29):\penalty0
  861, 2018.

\bibitem[Morerio et~al.(2018)Morerio, Cavazza, and Murino]{morerio2017minimal}
Pietro Morerio, Jacopo Cavazza, and Vittorio Murino.
\newblock Minimal-entropy correlation alignment for unsupervised deep domain
  adaptation.
\newblock In \emph{ICLR}, 2018.

\bibitem[Peng et~al.(2019)Peng, Bai, Xia, Huang, Saenko, and
  Wang]{peng2019moment}
Xingchao Peng, Qinxun Bai, Xide Xia, Zijun Huang, Kate Saenko, and Bo~Wang.
\newblock Moment matching for multi-source domain adaptation.
\newblock In \emph{Proceedings of the IEEE/CVF International Conference on
  Computer Vision}, pp.\  1406--1415, 2019.

\bibitem[Pravitasari et~al.(2020)Pravitasari, Iriawan, Almuhayar, Azmi,
  Irhamah, Fithriasari, Purnami, and Ferriastuti]{pravitasari2020unet}
Anindya~Apriliyanti Pravitasari, Nur Iriawan, Mawanda Almuhayar, Taufik Azmi,
  Irhamah Irhamah, Kartika Fithriasari, Santi~Wulan Purnami, and Widiana
  Ferriastuti.
\newblock Unet-vgg16 with transfer learning for mri-based brain tumor
  segmentation.
\newblock \emph{TELKOMNIKA (Telecommunication Computing Electronics and
  Control)}, 18\penalty0 (3):\penalty0 1310--1318, 2020.

\bibitem[Rabin et~al.(2011)Rabin, Peyr{\'e}, Delon, and
  Bernot]{rabin2011wasserstein}
J.~Rabin, G.~Peyr{\'e}, J.~Delon, and M.~Bernot.
\newblock Wasserstein barycenter and its application to texture mixing.
\newblock In \emph{International Conference on Scale Space and Variational
  Methods in Computer Vision}, pp.\  435--446. Springer, 2011.

\bibitem[Redko \& Sebban(2017)Redko and Sebban]{redko2017theoretical}
A.~Redko, I.and~Habrard and M.~Sebban.
\newblock Theoretical analysis of domain adaptation with optimal transport.
\newblock In \emph{Joint European Conference on Machine Learning and Knowledge
  Discovery in Databases}, pp.\  737--753. Springer, 2017.

\bibitem[Ronneberger et~al.(2015{\natexlab{a}})Ronneberger, Fischer, and
  Brox]{ronneberger2015u}
Olaf Ronneberger, Philipp Fischer, and Thomas Brox.
\newblock U-net: Convolutional networks for biomedical image segmentation.
\newblock In \emph{Medical Image Computing and Computer-Assisted
  Intervention--MICCAI 2015: 18th International Conference, Munich, Germany,
  October 5-9, 2015, Proceedings, Part III 18}, pp.\  234--241. Springer,
  2015{\natexlab{a}}.

\bibitem[Ronneberger et~al.(2015{\natexlab{b}})Ronneberger, Fischer, and
  Brox]{ronneberger2015unet}
Olaf Ronneberger, Philipp Fischer, and Thomas Brox.
\newblock U-net: Convolutional networks for biomedical image segmentation,
  2015{\natexlab{b}}.

\bibitem[Rostami(2021{\natexlab{a}})]{rostami2021lifelong}
Mohammad Rostami.
\newblock Lifelong domain adaptation via consolidated internal distribution.
\newblock \emph{Advances in neural information processing systems},
  34:\penalty0 11172--11183, 2021{\natexlab{a}}.

\bibitem[Rostami(2021{\natexlab{b}})]{rostami2021transfer}
Mohammad Rostami.
\newblock \emph{Transfer Learning Through Embedding Spaces}.
\newblock CRC Press, 2021{\natexlab{b}}.

\bibitem[Rostami(2022)]{rostami2022increasing}
Mohammad Rostami.
\newblock Increasing model generalizability for unsupervised visual domain
  adaptation.
\newblock In \emph{Conference on Lifelong Learning Agents}, pp.\  281--293.
  PMLR, 2022.

\bibitem[Rostami \& Galstyan(2022)Rostami and Galstyan]{rostami2021cognitively}
Mohammad Rostami and Aram Galstyan.
\newblock Cognitively inspired learning of incremental drifting concepts.
\newblock In \emph{IJCAI}, 2022.

\bibitem[Rostami \& Galstyan(2023)Rostami and Galstyan]{rostami2023overcoming}
Mohammad Rostami and Aram Galstyan.
\newblock Overcoming concept shift in domain-aware settings through
  consolidated internal distributions.
\newblock In \emph{Proceedings of the AAAI conference on artificial
  intelligence}, volume~37, pp.\  9623--9631, 2023.

\bibitem[Rostami et~al.(2018)Rostami, Huber, and Lu]{rostami2018crowdsourcing}
Mohammad Rostami, David Huber, and Tsai-Ching Lu.
\newblock A crowdsourcing triage algorithm for geopolitical event forecasting.
\newblock In \emph{Proceedings of the 12th ACM Conference on Recommender
  Systems}, pp.\  377--381, 2018.

\bibitem[Rostami et~al.(2019{\natexlab{a}})Rostami, Kolouri, Eaton, and
  Kim]{rostami2019deep}
Mohammad Rostami, Soheil Kolouri, Eric Eaton, and Kyungnam Kim.
\newblock Deep transfer learning for few-shot sar image classification.
\newblock \emph{Remote Sensing}, 11\penalty0 (11):\penalty0 1374,
  2019{\natexlab{a}}.

\bibitem[Rostami et~al.(2019{\natexlab{b}})Rostami, Kolouri, Eaton, and
  Kim]{rostami2019sar}
Mohammad Rostami, Soheil Kolouri, Eric Eaton, and Kyungnam Kim.
\newblock Sar image classification using few-shot cross-domain transfer
  learning.
\newblock In \emph{Proceedings of the IEEE/CVF Conference on Computer Vision
  and Pattern Recognition Workshops}, pp.\  0--0, 2019{\natexlab{b}}.

\bibitem[Rostami et~al.(2019{\natexlab{c}})Rostami, Kolouri, and
  Pilly]{rostami2019complementary}
Mohammad Rostami, Soheil Kolouri, and Praveen~K Pilly.
\newblock Complementary learning for overcoming catastrophic forgetting using
  experience replay.
\newblock In \emph{Proceedings of the 28th International Joint Conference on
  Artificial Intelligence}, pp.\  3339--3345. AAAI Press, 2019{\natexlab{c}}.

\bibitem[Rostami et~al.(2020)Rostami, Kolouri, Pilly, and
  McClelland]{rostami2020generative}
Mohammad Rostami, Soheil Kolouri, Praveen~K Pilly, and James McClelland.
\newblock Generative continual concept learning.
\newblock In \emph{AAAI}, pp.\  5545--5552, 2020.

\bibitem[Rostami et~al.(2023)Rostami, Bose, Narayanan, and
  Galstyan]{rostami2023domain}
Mohammad Rostami, Digbalay Bose, Shrikanth Narayanan, and Aram Galstyan.
\newblock Domain adaptation for sentiment analysis using robust internal
  representations.
\newblock In \emph{Findings of the Association for Computational Linguistics:
  EMNLP 2023}, pp.\  11484--11498, 2023.

\bibitem[Sankaranarayanan et~al.(2018{\natexlab{a}})Sankaranarayanan, Balaji,
  Castillo, and Chellappa]{sankaranarayanan2017generate}
S.~Sankaranarayanan, Y.~Balaji, C.~D Castillo, and R.~Chellappa.
\newblock Generate to adapt: Aligning domains using generative adversarial
  networks.
\newblock In \emph{CVPR}, 2018{\natexlab{a}}.

\bibitem[Sankaranarayanan et~al.(2018{\natexlab{b}})Sankaranarayanan, Balaji,
  Jain, Nam~Lim, and Chellappa]{sankaranarayanan2018learning}
Swami Sankaranarayanan, Yogesh Balaji, Arpit Jain, Ser Nam~Lim, and Rama
  Chellappa.
\newblock Learning from synthetic data: Addressing domain shift for semantic
  segmentation.
\newblock In \emph{Proceedings of the IEEE Conference on Computer Vision and
  Pattern Recognition}, pp.\  3752--3761, 2018{\natexlab{b}}.

\bibitem[Sourin et~al.(2010)Sourin, Yasmin, and
  Zagorodnov]{sourin2010segmentation}
Alexei Sourin, Shamima Yasmin, and Vitali Zagorodnov.
\newblock Segmentation of mri brain data using a haptic device.
\newblock In \emph{Proceedings of the 10th IEEE International Conference on
  Information Technology and Applications in Biomedicine}, pp.\  1--4. IEEE,
  2010.

\bibitem[Stan \& Rostami(2021)Stan and Rostami]{stan2021unsupervised}
Serban Stan and Mohammad Rostami.
\newblock Unsupervised model adaptation for continual semantic segmentation.
\newblock In \emph{Proceedings of the AAAI conference on artificial
  intelligence}, volume~35, pp.\  2593--2601, 2021.

\bibitem[Stan \& Rostami(2022{\natexlab{a}})Stan and Rostami]{stan2021privacy}
Serban Stan and Mohammad Rostami.
\newblock Privacy preserving domain adaptation for semantic segmentation of
  medical images.
\newblock In \emph{BMVC}, 2022{\natexlab{a}}.

\bibitem[Stan \& Rostami(2022{\natexlab{b}})Stan and Rostami]{stan2022secure}
Serban Stan and Mohammad Rostami.
\newblock Secure domain adaptation with multiple sources.
\newblock \emph{Transactions on Machine Learning Research}, 2022{\natexlab{b}}.

\bibitem[Sun et~al.(2022)Sun, Dai, and Xu]{sun2022rethinking}
Yongheng Sun, Duwei Dai, and Songhua Xu.
\newblock Rethinking adversarial domain adaptation: Orthogonal decomposition
  for unsupervised domain adaptation in medical image segmentation.
\newblock \emph{Medical Image Analysis}, 82:\penalty0 102623, 2022.

\bibitem[Tasar et~al.(2020)Tasar, Tarabalka, Giros, Alliez, and
  Clerc]{tasar2020standardgan}
Onur Tasar, Yuliya Tarabalka, Alain Giros, Pierre Alliez, and S{\'e}bastien
  Clerc.
\newblock Standardgan: Multi-source domain adaptation for semantic segmentation
  of very high resolution satellite images by data standardization.
\newblock In \emph{Proceedings of the IEEE/CVF Conference on Computer Vision
  and Pattern Recognition Workshops}, pp.\  192--193, 2020.

\bibitem[Tzeng et~al.(2017)Tzeng, Hoffman, Saenko, and
  Darrell]{tzeng2017adversarial}
Eric Tzeng, Judy Hoffman, Kate Saenko, and Trevor Darrell.
\newblock Adversarial discriminative domain adaptation.
\newblock In \emph{Proceedings of the IEEE Conference on Computer Vision and
  Pattern Recognition}, pp.\  7167--7176, 2017.

\bibitem[Venkat et~al.(2020)Venkat, Kundu, Singh, Revanur, and
  Babu]{venkat2021your}
Naveen Venkat, Jogendra~Nath Kundu, Durgesh~Kumar Singh, Ambareesh Revanur, and
  R~Venkatesh Babu.
\newblock Your classifier can secretly suffice multi-source domain adaptation.
\newblock In \emph{NeurIPS}, 2020.

\bibitem[Wang et~al.(2019)Wang, Dai, P{\'o}czos, and
  Carbonell]{wang2019characterizing}
Zirui Wang, Zihang Dai, Barnab{\'a}s P{\'o}czos, and Jaime Carbonell.
\newblock Characterizing and avoiding negative transfer.
\newblock In \emph{Proceedings of the IEEE/CVF conference on computer vision
  and pattern recognition}, pp.\  11293--11302, 2019.

\bibitem[Wei et~al.(2022)Wei, Huang, Jian, Hsu, Hsu, Lin, Cheng, Chen, Wei, and
  Chen]{wei2022semantic}
Yi-Chia Wei, Wen-Yi Huang, Chih-Yu Jian, Chih-Chin~Heather Hsu, Chih-Chung Hsu,
  Ching-Po Lin, Chi-Tung Cheng, Yao-Liang Chen, Hung-Yu Wei, and Kuan-Fu Chen.
\newblock Semantic segmentation guided detector for segmentation,
  classification, and lesion mapping of acute ischemic stroke in mri images.
\newblock \emph{NeuroImage: Clinical}, 35:\penalty0 103044, 2022.

\bibitem[Wen et~al.(2020)Wen, Greiner, and Schuurmans]{pmlr-v119-wen20b}
Junfeng Wen, Russell Greiner, and Dale Schuurmans.
\newblock Domain aggregation networks for multi-source domain adaptation.
\newblock In Hal~Daumé III and Aarti Singh (eds.), \emph{Proceedings of the
  37th International Conference on Machine Learning}, volume 119 of
  \emph{Proceedings of Machine Learning Research}, pp.\  10214--10224. PMLR,
  13--18 Jul 2020.
\newblock URL \url{http://proceedings.mlr.press/v119/wen20b.html}.

\bibitem[Xu et~al.(2020)Xu, Liu, Wang, Chen, and Wang]{xu2020reliable}
Renjun Xu, Pelen Liu, Liyan Wang, Chao Chen, and Jindong Wang.
\newblock Reliable weighted optimal transport for unsupervised domain
  adaptation.
\newblock In \emph{Proceedings of the IEEE/CVF Conference on Computer Vision
  and Pattern Recognition}, pp.\  4394--4403, 2020.

\bibitem[Xu et~al.(2018)Xu, Chen, Zuo, Yan, and Lin]{xu2018deep}
Ruijia Xu, Ziliang Chen, Wangmeng Zuo, Junjie Yan, and Liang Lin.
\newblock Deep cocktail network: Multi-source unsupervised domain adaptation
  with category shift.
\newblock In \emph{Proceedings of the IEEE Conference on Computer Vision and
  Pattern Recognition}, pp.\  3964--3973, 2018.

\bibitem[Xu et~al.(2019)Xu, Du, Zhang, Zhang, Wang, and Zhang]{xu2019self}
Yonghao Xu, Bo~Du, Lefei Zhang, Qian Zhang, Guoli Wang, and Liangpei Zhang.
\newblock Self-ensembling attention networks: Addressing domain shift for
  semantic segmentation.
\newblock In \emph{Proceedings of the AAAI Conference on Artificial
  Intelligence}, volume~33, pp.\  5581--5588, 2019.

\bibitem[Zhao et~al.(2019)Zhao, Li, Yue, Gu, Xu, Hu, Chai, and
  Keutzer]{zhao2019multi}
Sicheng Zhao, Bo~Li, Xiangyu Yue, Yang Gu, Pengfei Xu, Runbo Hu, Hua Chai, and
  Kurt Keutzer.
\newblock Multi-source domain adaptation for semantic segmentation.
\newblock \emph{Advances in neural information processing systems}, 32, 2019.

\bibitem[Zhao et~al.(2020)Zhao, Wang, Zhang, Gu, Li, Song, Xu, Hu, Chai, and
  Keutzer]{zhao2020multi}
Sicheng Zhao, Guangzhi Wang, Shanghang Zhang, Yang Gu, Yaxian Li, Zhichao Song,
  Pengfei Xu, Runbo Hu, Hua Chai, and Kurt Keutzer.
\newblock Multi-source distilling domain adaptation.
\newblock In \emph{Proceedings of the AAAI Conference on Artificial
  Intelligence}, volume~34, pp.\  12975--12983, 2020.

\bibitem[Zhu et~al.(2019)Zhu, Zhuang, and Wang]{zhu2019aligning}
Yongchun Zhu, Fuzhen Zhuang, and Deqing Wang.
\newblock Aligning domain-specific distribution and classifier for cross-domain
  classification from multiple sources.
\newblock In \emph{Proceedings of the AAAI Conference on Artificial
  Intelligence}, volume~33, pp.\  5989--5996, 2019.

\bibitem[Zou et~al.(2018)Zou, Yu, Kumar, and Wang]{zou2018unsupervised}
Yang Zou, Zhiding Yu, BVK Kumar, and Jinsong Wang.
\newblock Unsupervised domain adaptation for semantic segmentation via
  class-balanced self-training.
\newblock In \emph{Proceedings of the European conference on computer vision
  (ECCV)}, pp.\  289--305, 2018.

\end{thebibliography}

\appendix

\clearpage

\section{Appendix}

\subsection{Optimal Transport for Domain Adaptation}

Optimal Transport (OT) is a  probability metric based on determining an optimal transportation of probability mass   between two distributions.  
Given two probability distributions $\mu$ and $\nu$ over domains $X$ and $Y$, OT is defined as:

\begin{equation}
W(\mu, \nu) = \inf_{\gamma \in \Pi(\mu, \nu)} \int_{X \times Y} d(x, y) d\gamma(x, y)
\end{equation}

where $\Pi(\mu, \nu)$ represents the set of all joint distributions $\gamma(x, y)$ with marginals $\mu$ and $\nu$ on $X$ and $Y$, respectively. The transportation cost is   denoted as $d(\cdot, \cdot)$, which can vary based on the specific application.
  For instance,  in many UDA methods, the Euclidean distance is used. However, computing the OT  involves solving a complex optimization problem and can be computationally burdensome. Alternatively, SWD   reduces the computational complexity while retaining the foundational benefits of OT.

\subsection{Proof of Theorem}

 Our proof is built upon the following theorem, proposed for sinlge-source UDA:

\begin{theorem}{Theorem~2 from ~\cite{redko2017theoretical}}
    \label{theorem:redko2}
    
    \par Let $h$ be the hypothesis learnt by our model, and $h^*$ the hypothesis that minimizes $e_{\mathcal{S}} + e_{\mathcal{T}}$. Under the assumptions described in our framework, consider the existence of $N$ source samples and $M$ target samples, with empirical source and target distributions $\hat\mu_{\mathcal{S}}$ and $\hat\mu_{\mathcal{T}}$ in $\mathbb{R}^d$. Then, for any $d'>d$ and $\zeta<\sqrt{2}$, there exists a constant number $N_0$ depending on $d'$ such that for any  $\xi>0$ and $\min(N,M)\ge N_0 \max (\xi^{-(d'+2)}, 1)$ with probability at least $1-\xi$, the following holds:
    \begin{equation}
    \begin{split}
        e_{\mathcal{T}}(h) \leq &e_{\mathcal{S}}(h) + W(\hat{\mu}_{\mathcal{T}},\hat{\mu}_{\mathcal{S}}) + \\ &\sqrt{\big(2\log(\frac{1}{\xi})/\zeta\big)}\big(\sqrt{\frac{1}{N}}+\sqrt{\frac{1}{M}}\big) + e_{\mathcal{C}}(h^*),
    \end{split}
    \end{equation}
\end{theorem}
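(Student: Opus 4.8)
The plan is to establish the bound in two logically separate pieces: a deterministic inequality that controls the target error by the source error plus the \emph{true} Wasserstein distance $W(\mu_{\mathcal{S}},\mu_{\mathcal{T}})$, followed by a probabilistic argument that replaces the true Wasserstein distance by its empirical counterpart $W(\hat\mu_{\mathcal{S}},\hat\mu_{\mathcal{T}})$ at a controlled cost. Throughout I would assume, as in the Redko et al. framework, that the loss $\ell$ obeys the triangle inequality, that for every pair of hypotheses the composite map $x \mapsto \ell(h(x),h'(x))$ is Lipschitz with a uniformly bounded constant (so that Kantorovich--Rubinstein duality applies), and that the source and target distributions have sufficiently light tails to satisfy a transportation-cost (concentration) inequality.

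\textbf{Step 1 (deterministic bound).} First I would introduce the combined error $e_{\mathcal{C}}(h^*) = e_{\mathcal{S}}(h^*) + e_{\mathcal{T}}(h^*)$ of the ideal joint hypothesis $h^*$. Writing $e_{\mathcal{D}}(h,h') = \mathbb{E}_{x\sim\mathcal{D}}[\ell(h(x),h'(x))]$, the triangle inequality for $\ell$ gives $e_{\mathcal{T}}(h) \le e_{\mathcal{T}}(h^*) + e_{\mathcal{T}}(h,h^*)$. The crucial observation is that $x \mapsto \ell(h(x),h^*(x))$ is Lipschitz, so by the Kantorovich--Rubinstein dual representation $W(\mu,\nu) = \sup_{\|f\|_L\le 1}\big(\int f\,d\mu - \int f\,d\nu\big)$ one obtains $e_{\mathcal{T}}(h,h^*) \le e_{\mathcal{S}}(h,h^*) + W(\mu_{\mathcal{S}},\mu_{\mathcal{T}})$ after normalizing by the Lipschitz constant. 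Applying the triangle inequality once more to split $e_{\mathcal{S}}(h,h^*) \le e_{\mathcal{S}}(h) + e_{\mathcal{S}}(h^*)$ and collecting terms yields $e_{\mathcal{T}}(h) \le e_{\mathcal{S}}(h) + W(\mu_{\mathcal{S}},\mu_{\mathcal{T}}) + e_{\mathcal{C}}(h^*)$.

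\textbf{Step 2 (empirical concentration).} Next I would pass to the empirical distributions using the triangle inequality for the Wasserstein metric, $W(\mu_{\mathcal{S}},\mu_{\mathcal{T}}) \le W(\mu_{\mathcal{S}},\hat\mu_{\mathcal{S}}) + W(\hat\mu_{\mathcal{S}},\hat\mu_{\mathcal{T}}) + W(\hat\mu_{\mathcal{T}},\mu_{\mathcal{T}})$. To control the two deviation terms I would invoke the quantitative concentration inequality for empirical measures of Bolley--Guillin--Villani: under the light-tail assumption, for $d'>d$ and $\zeta<\sqrt2$ there is a constant $N_0$ such that whenever the sample size exceeds $N_0\max(\xi^{-(d'+2)},1)$, one has $P\big(W(\mu,\hat\mu_N) > \epsilon\big) \le \exp(-\tfrac{\zeta}{2}N\epsilon^2)$. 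Setting this probability equal to $\xi$ and solving gives $W(\mu,\hat\mu_N)\le \sqrt{2\log(1/\xi)/\zeta}\,/\sqrt{N}$ with probability at least $1-\xi$. Applying this to the source measure with $N$ samples and the target measure with $M$ samples and taking a union bound produces the term $\sqrt{2\log(1/\xi)/\zeta}\,\big(1/\sqrt{N}+1/\sqrt{M}\big)$, after which substituting back into the Step 1 bound gives exactly the claimed inequality.

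The step I expect to be the main obstacle is the concentration inequality in Step 2: obtaining the $1/\sqrt{N}$ rate with the precise dependence on the ambient dimension $d$ (through $d'$) and the explicit constant $N_0$ is the technically delicate ingredient, since empirical Wasserstein distances generically suffer the curse of dimensionality and the stated rate only holds under the transportation-cost (sub-Gaussian tail) assumption. The remaining bookkeeping --- tracking Lipschitz constants through Step 1 and absorbing the union-bound factor into the failure probability --- is routine by comparison.
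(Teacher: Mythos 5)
Your proposal is correct in structure, but there is nothing in the paper to compare it against: this statement is not proved in the paper at all. It is quoted verbatim as Theorem~2 of \cite{redko2017theoretical} and used as a black box in the appendix, where the paper's only original proof content is Lemma~1 (the Jensen-inequality bound $e_{\mathcal{T}}(h) \le \sum_k w_k e_{\mathcal{T}}(h_k)$ for the ensemble) followed by a term-by-term application of this imported theorem. What you have reconstructed is, in effect, the proof from the cited source itself, and you have reconstructed it faithfully: your Step~1 is Redko et al.'s Lemma~1 plus their Theorem~1 (Lipschitzness of $x \mapsto \ell(h(x),h^*(x))$, Kantorovich--Rubinstein duality, and two applications of the triangle inequality for the loss, yielding the combined-error term $e_{\mathcal{C}}(h^*)$), and your Step~2 is their passage to empirical measures via the Wasserstein triangle inequality and the Bolley--Guillin--Villani concentration result, which is exactly where the constants $N_0$, $d'>d$, and $\zeta<\sqrt{2}$ originate. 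You also correctly flag the concentration step as the delicate one.

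Two small points of bookkeeping. First, your union bound as written gives confidence $1-2\xi$, not $1-\xi$; you need to run each concentration event at level $\xi/2$ (or accept a rescaled constant), which you gesture at but should state. Second, the duality argument in your Step~1 is specific to the order-$1$ Wasserstein distance, since Kantorovich--Rubinstein duality holds for $W_1$; this is indeed the metric in Redko et al.'s theorem, but note that the surrounding paper silently conflates it with the $W_2$/sliced-Wasserstein objective it actually optimizes (Eq.~\eqref{eq:w2_align} and the $W_2$ appearing in the main theorem's bound), so if you wanted your proof to support the paper's Theorem~\ref{theorem:main} as stated, you would additionally need either a $W_1 \le W_2$ comparison or a version of Step~1 adapted to $W_2$, a gap that belongs to the paper rather than to your argument.
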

, where $e_{\mathcal{C}} = e_{\mathcal{S}}(h^*) + e_{\mathcal{T}}(h^*)$ is the performance of an optimal hypothesis on the present set of samples when labeled samples from both domains are used at the same time to train a model jointly on both domains.

We adapt the result in Theorem~\ref{theorem:redko2} to provide an upper bound in our multi-source setting. Consider the following two results. 


    


\begin{lemma} 
    \label{lemma:upper-bound}

    Let $h$ be the hypothesis describing the multi-source model, and let $h_k$ be the hypothesis learnt for a source domain $k$. If $e_\mathcal{T}(h)$ is the error function for hypothesis $h$ on domain $\mathcal{T}$, then

    
    \begin{align}
        e_{\mathcal{T}}(h) \leq \sum_{k=1}^n w_k e_\mathcal{T} (h_k)
    \end{align}
\end{lemma}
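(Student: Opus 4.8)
The plan is to observe that the multi-source hypothesis $h$ returned by the \textsc{Ensemble} procedure is, by construction, a convex combination of the per-source hypotheses, and then to push the target error through this combination using convexity of the loss. Indeed, the aggregation $M \gets \big(\sum_{k=1}^n w_k M_k\big)/\big(\sum_{k=1}^n w_k\big)$ in Algorithm~\ref{algorithm}, together with the normalization $w_k = \tilde{w}_k / \sum_j \tilde{w}_j$ from Eq.~\eqref{eq:choose_w}, ensures that $h = \sum_{k=1}^n w_k h_k$ with $w_k \ge 0$ and $\sum_{k=1}^n w_k = 1$. Hence the mixing weights $\bm{w}$ can be read as a probability distribution over the $n$ source models, and the ensemble prediction at each point is the $\bm{w}$-weighted mean of the individual predictions.

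First I would fix the expected error as $e_{\mathcal{T}}(h) = \mathbb{E}_{\bm{x}\sim\mathcal{T}}[\ell(h(\bm{x}), y(\bm{x}))]$, where $\ell$ is the cross-entropy loss underlying $e_{\mathcal{D}}$ and is convex in its prediction argument. The central step is then a pointwise Jensen inequality: for every fixed $\bm{x}$,
\begin{equation}
\ell\Big(\textstyle\sum_{k=1}^n w_k h_k(\bm{x}),\, y(\bm{x})\Big) \;\le\; \sum_{k=1}^n w_k\, \ell\big(h_k(\bm{x}),\, y(\bm{x})\big),
\end{equation}
which is valid precisely because $\bm{w}$ is a convex combination and $\ell(\cdot, y)$ is convex. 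Taking the expectation over $\bm{x}\sim\mathcal{T}$ and using linearity of expectation immediately yields $e_{\mathcal{T}}(h) \le \sum_{k=1}^n w_k\, e_{\mathcal{T}}(h_k)$, the claimed bound. This lemma then feeds into Theorem~\ref{theorem:main} by replacing each $e_{\mathcal{T}}(h_k)$ with its own single-source bound from Theorem~\ref{theorem:redko2}.

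The hard part will be justifying the two structural assumptions behind the Jensen step. The first is that the operative aggregation for the bound is the normalized weighted \emph{average} of the probabilistic masks, not the $\arg\max$ voting that also appears in Algorithm~\ref{algorithm}; I would argue that the hard-label readout is a post-processing step and that the averaged probability vector is the object on which the surrogate error is controlled. The second is convexity of $e_{\mathcal{T}}$, for which it suffices that $\ell$ satisfy Jensen's inequality — a property inherited from the convexity and boundedness assumptions already imposed on the loss in the framework of Theorem~\ref{theorem:redko2}, and satisfied concretely by cross-entropy in the predicted probabilities. Some care is also needed to confirm that all models are compared against the same target labeling function, so that the single $e_{\mathcal{T}}$ on the right-hand side is meaningful for every $h_k$.
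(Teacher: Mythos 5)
Your proposal is correct and follows essentially the same route as the paper: the paper also writes the ensemble prediction as $p(X)=\sum_k w_k f_k(X)$ and applies Jensen's inequality to the cross-entropy loss $-\log p(X)[y]$ (i.e., convexity of $-\log$) before taking the expectation over $\mathcal{T}$. Your added remarks on the averaged-probability versus $\arg\max$ readout and on the shared target labeling are reasonable clarifications but do not change the argument.
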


\begin{proof}
    Let $p(X) = \sum_{k=1}^n w_k f_k(X)$ with $\sum w_k=1, w_k > 0$ be the probabilistic estimate returned by our model for some input $X$. Also, consider that $y$ is the label associated with this input, then:

\begin{align*}
    e_\mathcal{T} (h) &= \mathbb{E}_{(X,y) \sim \mathcal{T}} \mathcal{L}_{ce} (p(X), \mathbb{1}_y)  = \mathbb{E}_{(X,y) \sim \mathcal{T}} -\log p(X)[y]  = \mathbb{E}_{(X,y) \sim \mathcal{T}} -\log (\sum_{k=1}^n w_k f_k(X)[y]) \\
        &\leq \mathbb{E}_{(X,y) \sim \mathcal{T}} \sum_{k=1}^n w_k (-\log f_k(X)[y]) \text{ Jensen's Ineq.} \\
        &= \sum_{k=1}^n w_k \mathbb{E}_{(X,y) \sim \mathcal{T}} \mathcal{L}_{ce} (f_k(x), \mathbb{1}_y)  = \sum_{k=1}^n w_k e_\mathcal{T}(h_k)
\end{align*}
\end{proof}

Given the above, the proof for Theorem~\ref{theorem:main} follows straightforwardly:

    
    

\begin{proof}
    
    \begin{align*}
        e_{\mathcal{T}}(h) \leq &\sum_{k=1}^n w_k e_\mathcal{T} (h_k) \text{ From Lemma~\ref{lemma:upper-bound}} \\
            \leq &\sum_{k=1}^n w_k (e_{\mathcal{S}_k}(h_k) + W(\hat{\mu}_{\mathcal{T}},\hat{\mu}_{\mathcal{S}_k}) + \sqrt{\big(2\log(\frac{1}{\xi})/\zeta\big)}\big(\sqrt{\frac{1}{N_k}}+\sqrt{\frac{1}{M}}\big) + e_{\mathcal{C}_k}(h_k^*)) \text{ by Theorem~\ref{theorem:redko2}}\\
    \end{align*}
    
\end{proof}

\subsection{The Segmentation Architecture}
Figure \ref{fig:3d-unet} presents the architecture of the 3D U-Net that we used in our experiments.

\begin{figure}[ht]
    \centering
    \includegraphics[width=\textwidth]{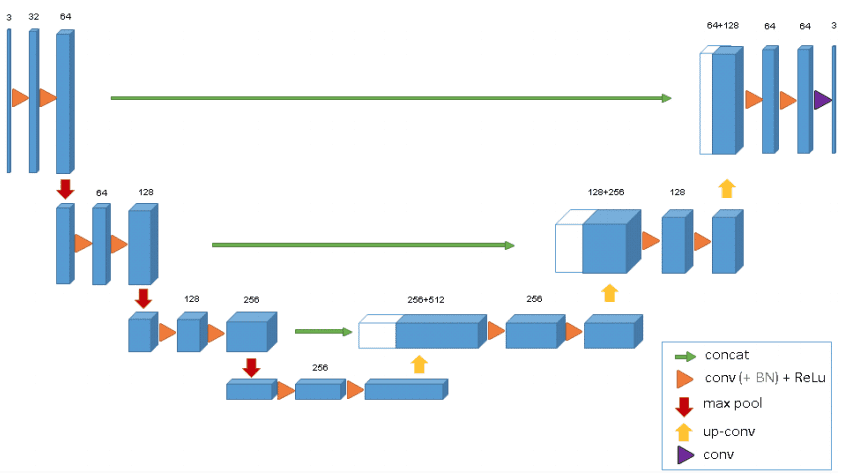}
    \caption{3D-UNET architecture}
    \label{fig:3d-unet}
\end{figure}

\subsection{Details of Setting the   Optimization Method} We used ADAM because it is well-suited for problems that are large in scale and have sparse gradients. It combines the advantages of both Adaptive Gradient Algorithm (AdaGrad) and Root Mean Square Propagation (RMSProp), allowing it to handle non-stationary objectives and noisy gradients.

\begin{itemize}
  \item \textbf{Initialization:} We initialized the weights to be optimized and set hyperparameters such as the learning rate, first and second moment estimates, and smoothing terms according to common best practices.
  \item \textbf{Iteration:} During each iteration, the optimizer computes the gradients of the Dice score with respect to the weights, and updates the weights in a direction that is expected to increase the Dice score.
  \item \textbf{Adaptive Learning Rates:} ADAM dynamically adjusts the learning rates during optimization, using both momentum (moving average of the gradient) and variance scaling. This makes it robust to changes in the landscape of the objective function.
  \item \textbf{Termination Criteria:} The optimization was terminated upon convergence, which could be determined by a   number of epochs or a tolerable change in the Dice score.
\end{itemize}




\end{document}